\newcommand\ind[1]{\ensuremath{\mathds{1}\left[#1\right]}}
\newcommand{\setfont}[1]{\mathbb{#1}}
\def\argmin{\ensuremath{\mbox{argmin}}}
\newcommand{\N}{\mathcal{N}}
\newcommand{\E}[2]{\mathbf{E}_{#1}\left[#2\right]}
\renewcommand{\P}[2]{\mathbf{P}_{#1}\left[#2\right]}
\newcommand{\R}{\setfont{R}}
\newcommand{\V}[1]{\mathbf{V}\left[#1\right]}
\newcommand{\Cov}[2]{\mathbf{Cov}\left[#1,#2\right]}
\newcommand{\X}{\setfont{X}}
\newcommand{\Y}{\setfont{Y}}
\newcommand{\Z}{\setfont{Z}}
\newcommand{\M}{\setfont{M}}
\newcommand{\defeq}{\triangleq}
\newcommand{\thmref}[1]{Theorem~\ref{#1}}
\newcommand{\lemref}[1]{Lemma~\ref{#1}}
\newcommand{\propref}[1]{Proposition~\ref{#1}}
\newtheorem{theorem}{Theorem}
\newtheorem{lemma}[theorem]{Lemma}
\newtheorem{definition}{Definition}
\newtheorem{example}{Example}
\begin{document}

%

%

\twocolumn[

\aistatstitle{Teacher Improves Learning by Selecting a Training Subset}

\aistatsauthor{ Yuzhe Ma \And Robert Nowak \And  Philippe Rigollet$^{*}$ \And Xuezhou Zhang \And Xiaojin Zhu}

\aistatsaddress{ University of Wisconsin-Madison \And  $^*$Massachusetts Institute of Technology} ]

\begin{abstract}
We call a learner super-teachable if a teacher can trim down an $iid$ training set while making the learner learn even better.
We provide sharp super-teaching guarantees on two learners: the maximum likelihood estimator for the mean of a Gaussian, and the large margin classifier in 1D.
For general learners, we provide a mixed-integer nonlinear programming-based algorithm to find a super teaching set. Empirical experiments show that our algorithm is able to find good super-teaching sets for both regression and classification problems.
\end{abstract}

\section{Introduction}
Consider the following question: a learner receives an $iid$ training set $S$ drawn from a distribution parametrized by $\theta^*$.
There is a teacher who knows $\theta^*$.
Can the teacher select a subset from $S$ so the learner estimates $\theta^*$ better from the subset than from $S$?

This question is distinct from training set reduction (see e.g.~\cite{garcia2012prototype,zeng2005smo,Wilson2000}) in that the teacher can use the knowledge of $\theta^*$ to carefully design the subset.
It is, in fact, a coding problem: Can the teacher approximately encode $\theta^*$ using items in $S$ for a known decoder, which is the learner?
As such, the question is not a machine learning task but rather a machine teaching one~\cite{Zhu2018Overview,Goldman1995Complexity,Zhu2015Machine}.

This question is relevant for several nascent applications.
One application is in understanding blackbox models such as deep nets.
Often observation to a blackbox model is limited to its predicted label $y=\theta^*(x)$ given input $x$.
One way to interpret a blackbox model is to locally train an interpretable model with data points $S$ labeled by the blackbox model around the region of interest~\cite{ribeiro2016should}.
We, however, ask for more: to reduce the size of the training set $S$ for the local learner \emph{while} making the learner approximate the blackbox better. The reduced training set itself also serves as representative examples of local model behavior.
Another application is in education.
Imagine a teacher who has a teaching goal $\theta^*$.
This is a reasonable assumption in practice: e.g. a geology teacher has the knowledge of the actual decision boundaries between rock categories.
However, the teacher is constrained to teach with a given textbook (or a set of courseware) $S$.
To the extent that the student is quantified mathematically, the teacher wants to select pages in the textbook with the guarantee that the student learns better from those pages than from gulping the whole book.

But is the question possible?
The following example says yes.
Consider learning a threshold classifier on the interval $[-1,1]$, with true threshold at $\theta^*=0$.
Let $S$ have $n$ items drawn uniformly from the interval and labeled according to $\theta^*$.
Let the learner be a hard margin SVM, which 
places the estimated threshold in the middle of the inner-most pair in $S$ with different labels:
$\hat\theta_S = (x_- + x_+)/2$
where $x_-$ is the largest negative training item and $x_+$ the smallest positive training item in $S$.
It is well known that $|\hat\theta_S-\theta^*|$ converges at a rate of $1/n$: the intuition being that the average space between adjacent items is $O(1/n)$.
\begin{figure}[h]
\centerline{\includegraphics[width=.5\textwidth]{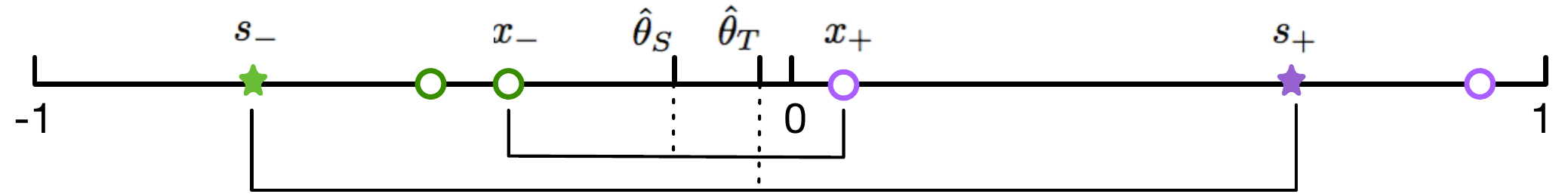}}
\caption{The original training set $S$ with $n=6$ items (circles and stars; green=negative, purple=positive), and the most-symmetric training set (stars) the teacher selects.} 
\label{fig:symmetric}
\end{figure}

The teacher knows everything but cannot tell $\theta^*$ directly to the learner.
Instead, it can select the \emph{most-symmetric pair} in $S$ about $\theta^*$ and give them to the learner as a two-item training set.
We will prove later that the risk on the most symmetric pair is $O(1/n^2)$, that is, learning from the selected subset surpasses learning from $S$.
Thus we observe something interesting: the teacher can turn a larger training set $S$ into a smaller and better subset for the midpoint classifier.
We call this phenomenon \textbf{super-teaching}.

\section{Formal Definition of Super Teaching}
\label{sec:def}
Let $\Z$ be the data space: for unsupervised learning $\Z=\X$, while for supervised learning $\Z=\X \times \Y$.
Let $p_\Z$ be the underlying distribution over $\Z$.
We take a function view of the learner: a learner $A$ is a function $A: \cup_{n=0}^\infty \Z^n \mapsto \Theta$, where $\Theta$ is the learner's hypothesis space.
The notation $\cup_{n=0}^\infty \Z^n$ defines the ``set of (potentially non-$iid$) training sets'',
namely multisets of any size whose elements are in $\Z$.
Given any training set $T \in \cup_{n=0}^\infty \Z^n$, we assume $A$ returns a unique hypothesis $A(T) \defeq \hat\theta_T \in \Theta$.
The learner's risk $R(\theta)$ for $\theta\in\Theta$ is defined as:
\begin{equation}
R(\theta)=\E{p_\Z}{\ell(\theta(x), y)}, \mbox{ or }
R(\theta)= \|\theta-\theta^*\|_2.
\label{eq:R}
\end{equation}
The former is for prediction tasks where $\ell()$ is a loss function and $\theta(x)$ denotes the prediction on $x$ made by model $\theta$;
the latter is for parameter estimation where we assume a realizable model $p_\Z = p_{\theta^*}$ for some $\theta^* \in \Theta$.

We now introduce a clairvoyant teacher $B$ who has full knowledge of $p_\Z, A, R$.
The teacher is also given an $iid$ training set $S=\{z_1, \ldots, z_n\} \sim p_\Z$. 
If the teacher teaches $S$ to $A$, the learner will incur a risk $R(A(S)) \defeq R(\hat\theta_S)$.
The teacher's goal is to judiciously select a subset $B(S) \subset S$ to act as a ``super teaching set'' for the learner so that $R(\hat\theta_{B(S)}) < R(\hat\theta_S)$.  
Of course, to do so the teacher must utilize her knowledge of the learning task, thus the subset is actually a function $B(S, p_\Z, A, R)$.  
In particular, the teacher knows $p_\Z$ already, and this sets our problem apart from machine learning.
For readability we suppress these extra parameters in the rest of the paper.
We formally define super teaching as follows.

\begin{definition}[Super Teaching]
\label{def:superteaching}
$B$ is a super teacher for learner $A$ if~$\forall\delta>0, \exists N$ such that $\forall n\ge N$
\begin{equation}
\P{S}{R(\hat\theta_{B(S)}) \le c_n R(\hat\theta_S)} > 1-\delta,
\end{equation}
where 
$S \stackrel{iid}{\sim} p_\Z^n, B(S)\subset S$, and $c_n \le 1$ is a sequence we call super teaching ratio.
\end{definition}
Obviously, $c_n=1$ can be trivially achieved by letting $B(S)=S$ so we are interested in small $c_n$.
There are two fundamental questions:
(1) Do super teachers provably exist?
(2) How to compute a super teaching set $B(S)$ in practice?

We answer the first question positively by exhibiting super teaching on two learners:
maximum likelihood estimator for the mean of a Gaussian in section~\ref{sec:Gaussian},
and
1D large margin classifier in section~\ref{sec:midpoint}. 
Guarantees on super teaching for general learners remain future work.
Nonetheless, empirically we can find a super teaching set for many general learners:
We formulate the second question as mixed-integer nonlinear programming in section~\ref{sec:MINLP}.
Empirical experiments in section~\ref{sec:exp} demonstrates that one can find a good $B(S)$ effectively.


\section{Analysis on Super Teaching for the MLE of Gaussian mean}
\label{sec:Gaussian}
In this section, we present our first theoretical result on super teaching, when the learner $A_{MLE}$ is the maximum likelihood estimator (MLE) for the mean of a Gaussian. 
Let $\Z=\X=\R$, $\Theta=\R$, $p_\Z(x)=\N(\theta^*, 1)$.
Given a sample $S$ of size $n$ drawn from $p_\Z$,
the learner computes the MLE for the mean: $\hat \theta_S =A_{MLE}(S)= \frac{1}{n}\sum_{i=1}^n x_i$.
We define the risk as $R(\hat \theta_S)=|\hat \theta_S-\theta^*|$.
The teacher we consider is the optimal $k$-subset teacher $B_k$, which uses the best subset of size $k$ to teach:
\begin{equation}
B_k(S) \in \argmin_{T \subset S, |T|=k} R(\hat\theta_T).
\end{equation}

To build intuition, it is well-known that the risk of $A_{MLE}$ under $S$ is $O(1/\sqrt{n})$ because the variance under $n$ items shrinks like $1/n$.
Now consider $k=1$.
Since the teacher $B_1$ knows $\theta^*$, under our setting the best teaching strategy is for her to select the item in $S$ closest to $\theta^*$, which forms the singleton teaching set $B_1(S)$.
One can show that with large probability this closest item is $O(1/n)$ away from $\theta^*$
(the central part of a Gaussian density is essentially uniform).
Therefore, we already see a super teaching ratio of $c_n = n^{-\frac{1}{2}}$.
More generally, our main result below shows that $B_k$ achieves a super teaching ratio $c_n = O(n^{-k+\frac{1}{2}})$:
\begin{restatable}{theorem}{Amuthm}
\label{thm:Amuthm}
Let $B_k$ be the optimal $k$-subset teacher.
$\forall \epsilon\in(0,\frac{2k-1}{4}), \forall\delta\in(0, 1)$, $\exists N(k, \epsilon, \delta)$ such that $\forall n\ge N(k, \epsilon, \delta)$, $\P{}{R(\hat \theta_{B_k(S)})\le c_nR(\hat \theta_S)}>1-\delta$, where $c_n=\frac{k^{k-\epsilon}}{\sqrt{k}}n^{-k+\frac{1}{2}+2\epsilon}$.
\end{restatable}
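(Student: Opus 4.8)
The plan is to sandwich the ratio $R(\hat\theta_{B_k(S)})/R(\hat\theta_S)$ by bounding numerator and denominator on separate high-probability events and then union-bounding. By translation equivariance of the sample mean and of the risk I may assume $\theta^*=0$. The denominator is exact: $\hat\theta_S-\theta^*=\frac1n\sum_i x_i\sim\N(0,1/n)$, so $R(\hat\theta_S)=|Z|/\sqrt n$ with $Z\sim\N(0,1)$, and choosing $z_\delta>0$ with $\P{}{|Z|<z_\delta}\le\delta/2$ gives $R(\hat\theta_S)\ge z_\delta/\sqrt n$ with probability at least $1-\delta/2$; on that event $c_nR(\hat\theta_S)\ge\frac{k^{k-\epsilon}z_\delta}{\sqrt k}\,n^{-k+2\epsilon}$. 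Since $B_k$ minimises the risk over all $k$-subsets, everything reduces to the claim that, with probability at least $1-\delta/2$, there is a $k$-subset $T\subset S$ with $R(\hat\theta_T)\le O_{k,\delta}\!\bigl((\log n)^k n^{-k}\bigr)$: because $(\log n)^k n^{-k}=o(n^{-k+2\epsilon})$ for every fixed $\epsilon>0$, this gives $R(\hat\theta_{B_k(S)})\le R(\hat\theta_T)\le c_nR(\hat\theta_S)$ on the intersection of the two events for all $n\ge N(k,\epsilon,\delta)$. The hypothesis $\epsilon\in(0,\frac{2k-1}{4})$ is not needed for this estimate; it only guarantees $c_n\to0$, i.e.\ that $B_k$ is genuinely a super teacher.

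To build the required subset I would split $S$ into $k$ independent blocks $S^{(1)},\dots,S^{(k)}$ of size $n/k$ and prove, by induction on $j=1,\dots,k$, the statement $H_j$: with probability at least $1-j\delta/(2k)$ the set of partial sums $P_j=\{x_1+\cdots+x_j : x_i\in S^{(i)}\}$ is $\eta_j$-dense (every point is within $\eta_j$ of $P_j$) in a slightly shrinking interval $I_j\ni 0$ contained in $[-1,1]$, where $\eta_1\asymp k(\log n)/n$ and $\eta_{j+1}\asymp k^2\eta_j(\log n)/n$, so that $\eta_k\le\Gamma(k)(\log n)^k n^{-k}$ for a constant $\Gamma(k)$. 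The base case $H_1$ is the standard maximum-gap estimate for $n/k$ $iid$ Gaussians on a compact interval, using that the Gaussian density is bounded below there. For the inductive step I condition on $H_j$ and on the points of $S^{(1)},\dots,S^{(j)}$, which freezes $P_j$ and makes $S^{(j+1)}$ a fresh independent sample. From the $\eta_j$-density of $P_j$ one extracts, inside any window of length $\Theta(1/k)$, a $2\eta_j$-separated subset of $P_j$ of size $\Omega(1/(k\eta_j))$; this forces the sublevel set $\{z:\operatorname{dist}(z,P_j)\le\eta\}$ to have Lebesgue measure $\gtrsim \eta/(k\eta_j)$ there, so a single fresh point $y\in S^{(j+1)}$ lands, with probability $\gtrsim \eta/(k\eta_j)$, at a location where $\operatorname{dist}(\,\cdot-y,\,P_j)\le\eta$ — equivalently, $\{x_1,\dots,x_j,y\}$ is a $(j+1)$-subset whose sum is within $\eta$ of a prescribed target. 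Since the $n/k$ points of $S^{(j+1)}$ are $iid$ and independent of $P_j$, these are $iid$ Bernoulli trials and a target is missed with probability $\le\exp\!\bigl(-\Omega(n\eta/(k^2\eta_j))\bigr)$; taking $\eta=\eta_{j+1}/2$, choosing the constant in $\eta_{j+1}\asymp k^2\eta_j(\log n)/n$ large enough (depending on $k$), and union-bounding over an $\eta_{j+1}$-net of targets makes the total step-failure probability at most $\delta/(2k)$, which closes the induction.

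Feeding the target $0\in I_k$ into $H_k$ yields a genuine $k$-subset $T$ — one point from each block — with $R(\hat\theta_T)\le\eta_k/k\le\Gamma(k)(\log n)^k n^{-k}/k$, and intersecting with the denominator event (probability at least $1-\delta$) gives $R(\hat\theta_{B_k(S)})/R(\hat\theta_S)\le \Gamma(k)(\log n)^k z_\delta^{-1}\,n^{-k+1/2}\le c_n$ once $n$ is large, since $(\log n)^k=o(n^{2\epsilon})$. I expect the inductive densification step to be the main obstacle: the $(n/k)^j$ partial sums in $P_j$ are heavily correlated, so one cannot naively claim that they resolve finely enough to be hit by a fresh point; the resolution is exactly the block decomposition together with the ``extract a separated sub-net, then lower-bound a sublevel set of the distance function'' device above, and one must be careful that the constant in $\eta_{j+1}$ is allowed to depend on $k$ so that the per-step union bound costs only $n^{-\Omega(1)}$. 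A secondary nuisance is reproducing the precise prefactor $k^{k-\epsilon}/\sqrt k$ rather than some cruder $\Gamma(k)$; this is pure bookkeeping, and the slack hidden in $n^{2\epsilon}$ comfortably swallows the logarithms and the block-partition constants.
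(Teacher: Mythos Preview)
Your plan is correct and proves the theorem as stated, but it proceeds along a genuinely different line from the paper. The paper bounds the numerator by a \emph{second-moment argument}: it sets $T=\sum_{|I|=k}\ind{|\gamma_I|<\beta}$ with $\beta=(k/n)^{k-\epsilon}$, lower-bounds $\E{}{T}$ directly, and upper-bounds $\V{T}$ by splitting the covariance sum according to the overlap $|I\cap I'|$ (this is where \lemref{lem:intersect} on counting overlapping index pairs enters, together with the joint-Gaussian density bound for correlated $\gamma_I,\gamma_{I'}$); Chebyshev then gives $\P{}{T=0}\to 0$. Your approach instead \emph{constructs} a good subset: you partition $S$ into $k$ independent blocks and, by an inductive maximum-gap/densification argument, show that the set of block-wise partial sums becomes $O_k((\log n)^k n^{-k})$-dense near $0$, so one subset with one point per block already has risk of that order. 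Both routes reach $R(\hat\theta_{B_k(S)})=o(n^{-k+2\epsilon})$, which, combined with the elementary lower bound $R(\hat\theta_S)\gtrsim_\delta n^{-1/2}$, yields the claimed $c_n$. What the paper's method buys is a slightly sharper upper bound (no log factors) \emph{and} a matching lower bound $R(\hat\theta_{B_k(S)})\gtrsim (k/n)^{k+\epsilon}$, i.e.\ the full two-sided statement of \thmref{thm:Gaussian}; the price is the combinatorial bookkeeping of \lemref{lem:intersect} and the covariance calculation. What your method buys is that it sidesteps correlations entirely via the block decomposition, is more algorithmic in flavor, and would transfer with little change to any sampling distribution with density bounded below on a neighborhood of $\theta^*$; the price is the polylogarithmic loss and a $\Gamma(k)$ that is not tracked, both of which --- as you note --- are absorbed by the $n^{2\epsilon}$ slack in $c_n$.
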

Toward proving the theorem,
\footnote{Remark: we introduced an auxiliary variable $\epsilon$ which controls the implicit tradeoff between $c_n$, how much super teaching helps, and $N$, how soon super teaching takes effect.
When $\epsilon\rightarrow 0$ the teaching ratio $c_n$ approaches $O(n^{-k+\frac{1}{2}})$, but as we will see $N(k, \epsilon, \delta)\rightarrow\infty$.
Similarly, $k$ also affects the tradeoff: the teaching ratio is smaller as we enlarge $k$, but 
$N(k, \epsilon, \delta)$ increases.}
we first recall the standard rate $R(\hat\theta_S) \approx n^{-\frac{1}{2}}$ if $A_{MLE}$ learns from the whole training set $S$:

\begin{restatable}{proposition}{thmAsyGaussianPool}
\label{thm:AsyGaussianPool}
Let $S$ be an $n$-item $iid$ sample drawn from $ \N(\theta^*, 1)$. $\forall \epsilon>0$, $\forall\delta\in(0,1)$, $\exists N_1(\epsilon, \delta)$ such that $\forall n\ge N_1$, 
\begin{equation}
\P{}{n^{-\frac{1}{2}-\epsilon}<R(\hat \theta_S)<n^{-\frac{1}{2}+\epsilon}}>1-\delta.
\end{equation}
\end{restatable}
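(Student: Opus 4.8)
The plan is to use the exact sampling distribution of the sample mean rather than any asymptotic argument. Since $S=\{x_1,\dots,x_n\}$ is $iid$ $\N(\theta^*,1)$, the MLE $\hat\theta_S=\frac1n\sum_i x_i$ satisfies $\hat\theta_S-\theta^*\sim\N(0,1/n)$, so $Z\defeq\sqrt{n}\,(\hat\theta_S-\theta^*)$ is a \emph{standard} Gaussian whose law does not depend on $n$. Under the risk $R(\hat\theta_S)=|\hat\theta_S-\theta^*|=|Z|/\sqrt{n}$, the event in the proposition, $n^{-1/2-\epsilon}<R(\hat\theta_S)<n^{-1/2+\epsilon}$, is exactly the event $n^{-\epsilon}<|Z|<n^{\epsilon}$. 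It therefore suffices to show $\P{}{n^{-\epsilon}<|Z|<n^{\epsilon}}\to 1$ and to quantify the rate so that $N_1(\epsilon,\delta)$ can be named.

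I would split the complementary event into an anti-concentration piece and a tail piece. For the first, the standard normal density is bounded by $1/\sqrt{2\pi}$, so $\P{}{|Z|\le n^{-\epsilon}}\le \frac{2}{\sqrt{2\pi}}\,n^{-\epsilon}$. For the second, a standard sub-Gaussian tail bound gives $\P{}{|Z|\ge n^{\epsilon}}\le 2\exp(-n^{2\epsilon}/2)$. A union bound yields
\begin{equation}
\P{}{R(\hat\theta_S)\notin(n^{-\frac12-\epsilon},\,n^{-\frac12+\epsilon})}\le \frac{2}{\sqrt{2\pi}}\,n^{-\epsilon}+2\exp\!\left(-n^{2\epsilon}/2\right),
\end{equation}
and since the right-hand side decreases to $0$, one takes $N_1(\epsilon,\delta)$ to be the smallest integer for which it falls below $\delta$; forcing each summand below $\delta/2$ gives $N_1$ of order $\delta^{-1/\epsilon}$ (the polynomial anti-concentration term dominating the exponential tail term for small $\delta$).

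There is no real obstacle here — this is the ``standard rate'' recalled in the text — and the only mild subtlety worth flagging is that the statement is two-sided: beyond the usual concentration of $\hat\theta_S$ around $\theta^*$ (the upper tail of $|Z|$), one also needs anti-concentration, i.e.\ that $\hat\theta_S$ is not \emph{too} close to $\theta^*$ (the lower tail of $|Z|$), and it is precisely this piece that pins down the polynomial dependence $N_1=\Theta(\delta^{-1/\epsilon})$. Everything else reduces to elementary estimates for the normal distribution.
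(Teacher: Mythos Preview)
Your proof is correct and follows essentially the same approach as the paper: both exploit the exact distribution $\sqrt{n}(\hat\theta_S-\theta^*)\sim\N(0,1)$, bound the anti-concentration piece by the trivial density bound $\le 1/\sqrt{2\pi}$, and bound the upper tail by a standard Gaussian tail estimate (the paper uses a Mill's-ratio-type integral manipulation where you invoke the sub-Gaussian inequality, but these are equivalent here). The paper's explicit choice $N_1(\epsilon,\delta)=(\delta^{-1}\sqrt{8/\pi})^{1/\epsilon}$ matches your stated order $\Theta(\delta^{-1/\epsilon})$.
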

\begin{proof}
$R(\hat \theta_S)=|\hat \theta_S-\theta^*|$ and $\hat \theta_S-\theta^*\sim \N(0, n^{-1})=\sqrt{\frac{n}{2\pi}}e^{-\frac{nx^2}{2}}$. Let $\alpha=n^{-\frac{1}{2}-\epsilon}$ and $\beta=n^{-\frac{1}{2}+\epsilon}$. 
We have
\begin{equation}
\begin{aligned}
\label{lower}
&\P{}{R(\hat\theta_S)\le\alpha}=2\int_{0}^\alpha \sqrt{\frac{n}{2\pi}}e^{-\frac{nx^2}{2}}dx\\
&<2\int_{0}^\alpha \sqrt{\frac{n}{2\pi}}dx=2\alpha\sqrt{\frac{n}{2\pi}}=\sqrt{\frac{2}{\pi}}n^{-\epsilon},
\end{aligned}
\end{equation}
\begin{equation}
\label{upper}
\begin{aligned}
&\P{}{R(\hat\theta_S)\ge\beta}=2\int_{\beta}^\infty \sqrt{\frac{n}{2\pi}}e^{-\frac{nx^2}{2}}dx\\
&<2\int_{\beta}^\infty \frac{x}{\beta}\sqrt{\frac{n}{2\pi}}e^{-\frac{nx^2}{2}}dx=\int_{\beta^2}^\infty \frac{1}{\beta}\sqrt{\frac{n}{2\pi}}e^{-\frac{ny}{2}}dy\\
&=\frac{1}{\beta}\sqrt{\frac{2}{n\pi}}e^{-\frac{n\beta^2}{2}}<\frac{1}{\beta}\sqrt{\frac{2}{n\pi}}=\sqrt{\frac{2}{\pi}}n^{-\epsilon}.
\end{aligned}
\end{equation}
Thus 
$\P{}{\alpha< R(\hat\theta_S)<\beta}
=1-\P{}{R(\hat\theta_S)\le \alpha}-\P{}{R(\hat\theta_S)\ge \beta}
>1-2\sqrt{\frac{2}{\pi}}n^{-\epsilon}$. 
Let $N_1(\epsilon, \delta)=(\frac{1}{\delta}\sqrt{\frac{8}{\pi}})^{\frac{1}{\epsilon}}$, then $\forall n\ge N_1$, $\P{}{\alpha< R(\hat\theta_S)<\beta}>1-\delta$.
\end{proof}
We now work out the risk of $A_{MLE}$ if it learns from the optimal $k$-subset teacher $B_k$. 
\thmref{thm:Gaussian} says that this risk is very small and sharply concentrated around $R(\hat\theta_{B_k(S)}) \approx n^{-k}$. To prove~\thmref{thm:Gaussian}, we first give the following lemma.
\begin{lemma}
Denote $C^n_k=\begin{pmatrix} n\\k\end{pmatrix}$. Let the index set $I=\{1,2,...n\}$ where $n\ge 4k$. Consider all subsets of size $k$, then there are at most $4^kC^{2k}_kC^n_{2k-1}$ ordered pairs of subsets that are overlapping but not identical.
\label{lem:intersect}
\end{lemma}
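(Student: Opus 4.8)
The plan is to first translate ``overlapping but not identical'' into a size condition on the union, and then bound the number of such ordered pairs by a crude covering argument. If $(T_1,T_2)$ is an ordered pair of $k$-subsets of $I$ with $T_1\cap T_2\neq\emptyset$ and $T_1\neq T_2$, then $1\le |T_1\cap T_2|\le k-1$, equivalently $k+1\le |T_1\cup T_2|\le 2k-1$. In particular the union has at most $2k-1$ elements, so (since $n\ge 4k>2k-1$) it can be extended to a subset $W\subseteq I$ with $|W|=2k-1$ and $T_1,T_2\subseteq W$.

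First I would use this to over-count. Pick $W\subseteq I$ of size $2k-1$ in $C^n_{2k-1}$ ways; then pick $T_1\subseteq W$ of size $k$ and $T_2\subseteq W$ of size $k$, each in $C^{2k-1}_k$ ways. Every overlapping non-identical pair is captured by at least one such choice of $W$ (and this enumeration harmlessly also includes identical and disjoint pairs, which only inflates the count), so the number of overlapping non-identical ordered pairs is at most $C^n_{2k-1}\,(C^{2k-1}_k)^2$.

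It then remains to simplify the constant. Using $C^{2k}_k=C^{2k-1}_k+C^{2k-1}_{k-1}=2\,C^{2k-1}_k$ (since $C^{2k-1}_k=C^{2k-1}_{k-1}$) gives $C^{2k-1}_k=\tfrac12 C^{2k}_k$, hence $(C^{2k-1}_k)^2=\tfrac14 (C^{2k}_k)^2$. Bounding one factor of $C^{2k}_k$ by $\sum_{j=0}^{2k}C^{2k}_j=4^k$ yields $(C^{2k-1}_k)^2\le \tfrac14\cdot 4^k\cdot C^{2k}_k<4^k C^{2k}_k$, and multiplying through by $C^n_{2k-1}$ gives the claimed bound $4^k C^{2k}_k C^n_{2k-1}$.

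The argument is short, so I do not expect a serious obstacle; the only points needing care are checking that the covering step is a legitimate over-count (each target pair sits in some $W$, and the extra counted pairs cause no harm) and that $n\ge 4k$ guarantees a $(2k-1)$-subset exists. An alternative route gives the exact count $\sum_{m=k+1}^{2k-1}C^n_m C^m_k C^k_{2k-m}$, obtained by first choosing the union $U=T_1\cup T_2$ of size $m$, then $T_1\subseteq U$, then the remaining $2k-m$ elements of $T_2$ inside $U$; finishing that version needs the monotonicity $C^n_m\le C^n_{2k-1}$ for $m\le 2k-1\le n/2$, which is exactly where $n\ge 4k$ would be used.
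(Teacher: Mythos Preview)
Your argument is correct and, in its primary form, takes a genuinely different route from the paper. The paper decomposes by the overlap size $t=|I_1\cap I_2|$, computes the exact count $O_t=C^n_{2k-t}C^{2k-t}_t C^{2k-2t}_{k-t}$, then bounds each factor separately (in particular using $C^n_{2k-t}\le C^n_{2k-1}$, which requires $2k-1\le n/2$ and hence $n\ge 4k$) before summing $\sum_t C^{2k}_t=4^k$. Your covering argument avoids the decomposition entirely: embedding $T_1\cup T_2$ into a fixed $(2k-1)$-set $W$ yields the single bound $C^n_{2k-1}(C^{2k-1}_k)^2$, and the identity $C^{2k-1}_k=\tfrac12 C^{2k}_k$ together with $C^{2k}_k\le 4^k$ finishes it with a factor of $4$ to spare. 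A side benefit is that your main route only needs $n\ge 2k-1$, not $n\ge 4k$. The ``alternative route'' you sketch at the end---summing over the union size $m=|T_1\cup T_2|$ and invoking the monotonicity $C^n_m\le C^n_{2k-1}$ for $m\le 2k-1\le n/2$---is, after the change of variable $m=2k-t$, precisely the paper's approach.
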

\begin{proof}
Let $I_1$ and $I_2$ be two subsets of size $k$ and they overlap on $t$ indexes. Then the total number of distinct indexes that appear in $I_1\cup I_2$ is $2k-t$. There are $C^n_{2k-t}$ ways of choosing such $2k-t$ indexes. Next we determine which $t$ indexes are overlapping ones. We have $C^{2k-t}_t$ ways of choosing such $t$ indexes. Finally we have $C^{2k-2t}_{k-t}$ ways of selecting half of the non-overlapping indexes and attribute them to $I_1$. Thus in total we have $O_t=C^n_{2k-t}C^{2k-t}_tC^{2k-2t}_{k-t}$ ordered pairs of subsets that overlap on $t$ indexes. By our assumption $n\ge4k$ we have $C^n_{2k-t}\le C^n_{2k-1}$. Also note that $C^{2k-t}_t<C^{2k}_t$ and $C^{2k-2t}_{k-t}<C^{2k}_k$, thus $O_t<C^n_{2k-1}C^{2k}_tC^{2k}_k$. Therefore the total number of ordered pairs of subsets that are overlapping but not identical is
\begin{equation}
\begin{aligned}
&O=\sum_{t=1}^{k-1}O_t<\sum_{t=1}^{k-1}C^n_{2k-1}C^{2k}_tC^{2k}_k\\
&<\sum_{t=0}^{2k}C^n_{2k-1}C^{2k}_tC^{2k}_k=4^kC^{2k}_kC^n_{2k-1}.
\end{aligned}
\end{equation}
\end{proof}
Now we prove the risk of the optimal $k$-subset teacher.
\begin{restatable}{theorem}{thmAsyGaussian}
\label{thm:Gaussian}
Let $B_k$ be the optimal $k$-subset teacher. Let $S$ be an $n$-item $iid$ sample drawn from $\N(\theta^*, 1)$. $\forall \epsilon\in(0, k), \forall \delta\in(0,1)$, $\exists N_2(k, \epsilon, \delta)$ such that $ \forall n\ge N_2$,
\begin{equation}
\P{}{\frac{1}{\sqrt{k}}(\frac{k}{n})^{k+\epsilon}<R(\hat \theta_{B_k(S)})<\frac{1}{\sqrt{k}}(\frac{k}{n})^{k-\epsilon}}>1-\delta.
\end{equation}
\end{restatable}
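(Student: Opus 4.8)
The plan is to fix $\theta^*=0$ without loss of generality and set $y_i=x_i$ (i.i.d.\ standard normals), so that for any size-$k$ index set $T$ the subset MLE has error $|\hat\theta_T-\theta^*|=\tfrac1k|S_T|$ with $S_T\defeq\sum_{i\in T}y_i\sim\N(0,k)$, and hence $R(\hat\theta_{B_k(S)})=\tfrac1k M$ where $M\defeq\min_{|T|=k}|S_T|$. After multiplying through by $k$, \thmref{thm:Gaussian} is equivalent to showing that, with probability at least $1-\delta$ for $n$ large, $\beta_-<M<\beta_+$ where $\beta_-\defeq\sqrt k\,(k/n)^{k+\epsilon}$ and $\beta_+\defeq\sqrt k\,(k/n)^{k-\epsilon}$. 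I would prove the two inequalities separately and combine them by a union bound, taking $N_2$ to be the maximum of the two resulting thresholds (plus the mild requirements $n\ge 4k$ and $\beta_+\le\sqrt k$ needed below).

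The lower bound $M>\beta_-$ is a first-moment / union-bound argument: since the $\N(0,k)$ density is at most $(2\pi k)^{-1/2}$, we have $\P{}{|S_T|\le\beta_-}\le 2\beta_-/\sqrt{2\pi k}$, so $\P{}{M\le\beta_-}\le\binom nk\,2\beta_-/\sqrt{2\pi k}\le(n^k/k!)\sqrt{2/\pi}\,(k/n)^{k+\epsilon}=\mathrm{const}(k)\cdot n^{-\epsilon}$, which drops below $\delta/2$ for all $n$ past an explicit $N_2^{(1)}(k,\epsilon,\delta)$. The upper bound $M<\beta_+$ is the crux and I would handle it by the second moment method applied to $N_{\beta_+}\defeq\sum_T\ind{|S_T|\le\beta_+}$: a matching density lower bound gives $\P{}{|S_T|\le\beta_+}\ge\beta_+/\sqrt{2\pi k}$ for $n$ large, hence (using $\binom nk\ge(n/k)^k$) $\E{}{N_{\beta_+}}\ge\tfrac1{\sqrt{2\pi}}(n/k)^\epsilon\to\infty$. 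For the variance I would split the ordered pairs $(T,T')$ into (i) identical pairs, contributing at most $\E{}{N_{\beta_+}}$; (ii) disjoint pairs, which are independent and contribute $0$ to the covariance; and (iii) overlapping-but-distinct pairs, which by \lemref{lem:intersect} number at most $4^k\binom{2k}{k}\binom{n}{2k-1}$. For a type-(iii) pair with $|T\cap T'|=t$, write $S_T=U+V$ and $S_{T'}=U+W$ with $U=\sum_{i\in T\cap T'}y_i$ independent of $V,W$ (the sums over the two symmetric differences, each $\N(0,k-t)$); conditioning on $U$ and bounding $\P{}{|U+V|\le\beta_+\mid U}$ and $\P{}{|U+W|\le\beta_+\mid U}$ each by $2\beta_+/\sqrt{2\pi(k-t)}\le 2\beta_+/\sqrt{2\pi}$ yields $\P{}{|S_T|\le\beta_+,\,|S_{T'}|\le\beta_+}\le 2\beta_+^2/\pi$. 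Therefore $\V{N_{\beta_+}}\le\E{}{N_{\beta_+}}+\tfrac2\pi 4^k\binom{2k}{k}\binom{n}{2k-1}\beta_+^2$, and since $(\E{}{N_{\beta_+}})^2\ge(n/k)^{2k}\beta_+^2/(2\pi k)$, Chebyshev gives $\P{}{N_{\beta_+}=0}\le 1/\E{}{N_{\beta_+}}+\tfrac2\pi 4^k\binom{2k}{k}\binom{n}{2k-1}\beta_+^2/(\E{}{N_{\beta_+}})^2=O((k/n)^\epsilon)+O_k(1/n)\to 0$, which falls below $\delta/2$ past an explicit $N_2^{(2)}(k,\epsilon,\delta)$; on this event $M\le\beta_+$.

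The main obstacle is the type-(iii) term in the variance bound. Its crude size is $\binom{n}{2k-1}\beta_+^2=\Theta(n^{2k-1}\beta_+^2)$, only a single factor of $n$ below $(\E{}{N_{\beta_+}})^2=\Theta(n^{2k}\beta_+^2)$, so the combinatorial count of \lemref{lem:intersect} and the $O(\beta_+^2)$ estimate for the joint probability of two correlated subset sums both have to be sharp enough in the exponent of $n$ (the $k$-dependent constants are harmless, being swallowed by the $n^{\pm\epsilon}$ window) to preserve exactly that factor of $n$ of slack; lining up these exponents — together with checking that the density approximations $\P{}{|\N(0,k)|\le\beta}\asymp\beta$ are valid once $\beta$ is small — is where essentially all the work lies.
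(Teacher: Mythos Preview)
Your proposal is correct and follows essentially the same architecture as the paper's proof: a union bound for the lower tail, and the second moment method (Chebyshev applied to $N_{\beta_+}$) for the upper tail, with the variance split into identical, disjoint, and overlapping-but-distinct pairs and the latter counted via \lemref{lem:intersect}. The only notable difference is in bounding $\P{}{|S_T|\le\beta_+,\,|S_{T'}|\le\beta_+}$ for overlapping pairs: the paper bounds the bivariate Gaussian density of $(\gamma_I,\gamma_{I'})$ by $(2\pi\sqrt{1-\rho^2})^{-1}$ to get $\frac{2\sqrt{k}}{\pi}\beta^2$, whereas you condition on the common part $U$ and exploit the conditional independence of $V,W$ to get the slightly sharper $\frac{2}{\pi}\beta_+^2$ --- a cosmetic improvement that does not affect the $O_k(1/n)$ exponent, so the two arguments are interchangeable.
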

\begin{proof}
Let $I\subseteq\{1,2,...,n\}$ and $|I|=k$, define $\gamma_I=\frac{1}{\sqrt{k}}\sum_{i\in I}(x_i-\theta^*)$. Let $S_I$ denote the subset indexed by $I$.
Note that 
$\hat\theta_{S_I}=\frac{1}{k}\sum_{i\in I}x_i$ and
$R(\hat\theta_{S_I})=|\hat\theta_{S_I}-\theta^*|=|\frac{1}{k}\sum_{i\in I}x_i-\theta^*|=\frac{1}{\sqrt{k}}|\gamma_I|$. 
Also note that $R(\hat \theta_{B_k(S)})=\inf_{I}R(\hat\theta_{S_I})=\frac{1}{\sqrt{k}}\inf_{I}|\gamma_I|$. 
Thus to prove \thmref{thm:Gaussian} it suffices to prove 
\begin{equation}
\P{}{(\frac{k}{n})^{k+\epsilon}<\inf_{I}|\gamma_I|<(\frac{k}{n})^{k-\epsilon}}\rightarrow1.
\end{equation}
Let $\alpha=(\frac{k}{n})^{k+\epsilon}$ and $\beta=(\frac{k}{n})^{k-\epsilon}$. We first prove the lower bound. Note that $\gamma_I$ has the same distribution for all $I$. Thus by the union bound,
\begin{equation}
\begin{aligned}
\P{}{\inf_{I}|\gamma_I|\le \alpha}=\P{}{\exists I: |\gamma_I|\le\alpha}\le C^n_k\P{}{|\gamma_{I_1}|\le\alpha},
\end{aligned}
\end{equation}
where $I_1=\{1,2,...,k\}$. Since $\gamma_{I_1}\sim \N(0, 1)$, we have
\begin{equation}
\P{}{|\gamma_{I_1}|\le\alpha}=\int_{-\alpha}^{\alpha}\frac{1}{\sqrt{2\pi}}e^{-\frac{x^2}{2}}dx<\sqrt{\frac{2}{\pi}}\alpha.
\end{equation}
Note that $C^n_k\le (\frac{en}{k})^k$. Thus,
\begin{equation}
\P{}{\inf_{I}|\gamma_I|\le \alpha}<(\frac{en}{k})^k\sqrt{\frac{2}{\pi}}\alpha=\sqrt{\frac{2}{\pi}}e^k(\frac{k}{n})^\epsilon\rightarrow 0.
\end{equation}
Thus $\exists N_2^{'}(k,\epsilon, \delta)$ such that $\forall n\ge N_2^{'}$, 
\begin{equation}\label{lower:gaussian}
\P{}{\inf_{I}|\gamma_I|\le \alpha}<\frac{\delta}{2}.
\end{equation}
To show the upper bound, we define $t_I=\ind{|\gamma_I|<\beta}$, where $\ind{}$ is the indicator function. Let $T=\sum_{I}t_I$. Then it suffices to show $\lim_{n\rightarrow \infty}\P{}{T=0}=0$.
Note that
\begin{equation}
\label{evbound}
\begin{aligned}
&\P{}{T=0}=\P{}{T-\E{}{T}=-\E{}{T}}\\
&\le\P{}{(T-\E{}{T})^2\ge(\E{}{T})^2}\le\frac{\V{T}}{(\E{}{T})^2},
\end{aligned}
\end{equation}
where the last inequality follows from the Markov inequality. 
Now we lower bound $\E{}{T}$.
\begin{equation}
\begin{aligned}
&\E{}{T}=\E{}{\sum_{I}t_I}=\sum_{I}\E{}{t_I}=C^{n}_k\E{}{t_{I_1}}\\
&=C^{n}_k\P{}{|\gamma_{I_1}|<\beta}=C^{n}_k\int_{-\beta}^{\beta}\frac{1}{\sqrt{2\pi}}e^{-\frac{x^2}{2}}dx.\\
\end{aligned}
\end{equation}
Note that $\epsilon<k$, thus $\beta<1$. For $x\in(-\beta,\beta)$, $\frac{1}{\sqrt{2\pi}}e^{-\frac{x^2}{2}}>\frac{1}{\sqrt{2\pi}}e^{-\frac{1}{2}}=\frac{1}{\sqrt{2\pi e}}$. Also note that $C^{n}_k>(\frac{n}{k})^k$, thus
\begin{equation}
\label{ebound}
\E{}{T}>(\frac{n}{k})^k\frac{1}{\sqrt{2\pi e}}2\beta=\sqrt{\frac{2}{\pi e}}(\frac{n}{k})^\epsilon.
\end{equation}
Now we upper bound $\V{T}$.
\begin{equation}
\V{T}=\sum_{I, I^{'}}\Cov{t_I}{t_{I^{'}}}=\sum_{I, I^{'}, |I\cap I^{'}|\ge1}\Cov{t_I}{t_{I^{'}}}.
\end{equation}
Note that for Bernoulli random variable $t_I$, $\V{t_I}\le \E{}{t_I}$. Thus if $I=I^{'}$, then
 \begin{equation}
 \begin{aligned}
 &\Cov{t_I}{t_{I^{'}}}=\V{t_I}\le \E{}{t_I}=\P{}{|\gamma_I|<\beta}\\
 &=\int_{-\beta}^{\beta}\frac{1}{\sqrt{2\pi}}e^{-\frac{x^2}{2}}dx<\frac{1}{\sqrt{2\pi}}2\beta=\sqrt{\frac{2}{\pi}}(\frac{k}{n})^{k-\epsilon}.
  \end{aligned}
 \end{equation}
Otherwise $1 \le |I\cap I^{'}| \le k-1$, that is, $I$ and $I^{'}$ overlap but not identical, then
\begin{equation}
\begin{aligned}
\Cov{t_I}{t_{I^{'}}}&=\E{}{t_It_{I^{'}}}-\E{}{t_I}\E{}{t_{I^{'}}}\le \E{}{t_It_{I^{'}}}\\
&=\P{}{|\gamma_I|<\beta, |\gamma_{I^{'}}|<\beta}.
\end{aligned}
\end{equation}
Note that $\gamma_I$ and $\gamma_{I^{'}}$ are jointly Gaussian with covariance
\begin{equation}
\begin{aligned}
\Cov{\gamma_I}{\gamma_{I^{'}}}&=\frac{1}{k}\sum_{i\in I, i^{'}\in I^{'}}\Cov{x_i-\theta^*}{x_{i^{'}}-\theta^*}\\
&=\frac{1}{k}\sum_{i\in I,i^{'}\in I^{'},i=i^{'}}1=\frac{|I\cap I^{'}|}{k}\defeq\rho,
\end{aligned}
\end{equation}
where $\frac{1}{k}\le\rho\le\frac{k-1}{k}$. The joint PDF of two standard normal distributions $x, y$ with covariance $\rho$ is
\begin{equation}
f(x,y)=\frac{1}{2\pi\sqrt{1-\rho^2}}e^{-\frac{x^2-2\rho xy+y^2}{2(1-\rho^2)}}.
\end{equation}
Note that $f(x, y)\le\frac{1}{2\pi\sqrt{1-\rho^2}}$, thus
\begin{equation}
\begin{aligned}
&\P{}{|\gamma_I|<\beta, |\gamma_{I^{'}}|<\beta}\le\iint\displaylimits_{|x|<\beta, |y|<\beta}\frac{1}{2\pi\sqrt{1-\rho^2}}dxdy\\
&=\frac{1}{2\pi\sqrt{1-\rho^2}}(2\beta)^2=\frac{2}{\pi\sqrt{1-\rho^2}}\beta^2.
\end{aligned}
\end{equation}
Since $\frac{2}{\pi\sqrt{1-\rho^2}}\le\frac{2}{\pi\sqrt{1-(\frac{k-1}{k})^2}}\le\frac{2}{\pi\sqrt{\frac{k}{k^2}}}=\frac{2\sqrt{k}}{\pi}$, thus
\begin{equation}
\P{}{|\gamma_I|<\beta, |\gamma_{I^{'}}|<\beta}\le\frac{2\sqrt{k}\beta^2}{\pi}=\frac{2\sqrt{k}}{\pi}(\frac{k}{n})^{2k-2\epsilon}.
\end{equation}
According to \lemref{lem:intersect}, there are at most $4^kC^{2k}_kC^n_{2k-1}$ pairs of $I$ and $I^{'}$ such that $1\le|I\cap I^{'}|\le k-1$. Thus,
\begin{equation}\label{vbound}
\begin{aligned}
&\V{T}=\sum_{I}\V{t_I}+\sum_{I\neq I^{'}, |I\cap I^{'}|\ge1}\Cov{t_I}{t_{I^{'}}}\\
&\le C^n_k\sqrt{\frac{2}{\pi}}(\frac{k}{n})^{k-\epsilon}+4^kC^{2k}_kC^n_{2k-1}\frac{2\sqrt{k}}{\pi}(\frac{k}{n})^{2k-2\epsilon}\\
&\le \sqrt{\frac{2}{\pi}}(\frac{en}{k})^k(\frac{k}{n})^{k-\epsilon}+4^kC^{2k}_k(\frac{en}{2k-1})^{2k-1}\frac{2\sqrt{k}}{\pi}(\frac{k}{n})^{2k-2\epsilon}\\
&=\sqrt{\frac{2}{\pi}}e^k(\frac{n}{k})^\epsilon+\frac{4\sqrt{k}}{\pi}C^{2k}_k(\frac{2ek}{2k-1})^{2k-1}(\frac{n}{k})^{2\epsilon-1}.\\
\end{aligned}
\end{equation}
Now plug~\eqref{vbound} and~\eqref{ebound} into~\eqref{evbound}, we have
\begin{equation}
\begin{aligned}
&\P{}{T=0}\le a_1(k)(\frac{n}{k})^{-\epsilon}+a_2(k)(\frac{n}{k})^{-1}\rightarrow 0,\\
\end{aligned}
\end{equation}
where $a_1=\sqrt{\frac{\pi}{2}}e^{k+1}$ and $a_2(k)=2\sqrt{k}eC^{2k}_k(\frac{2ek}{2k-1})^{2k-1}$. Thus $\exists N_2^{''}(k, \epsilon,\delta)$ such that $\forall n\ge N_2^{''}$,
\begin{equation}\label{upper:gaussian}
\P{}{\inf_{I}|\gamma_I|\ge \beta}<\frac{\delta}{2}.
\end{equation}
Let $N_2(k, \epsilon, \delta)=\max\{N_2^{'}(k, \epsilon,\delta), N_2^{''}(k, \epsilon,\delta)\}$, combining~\eqref{lower:gaussian} and ~\eqref{upper:gaussian} concludes the proof.
\end{proof}

Now we can conclude super-teaching by comparing~\thmref{thm:Gaussian} and~\propref{thm:AsyGaussianPool}:
\begin{proof}[\textbf{Proof of~\thmref{thm:Amuthm}}]
Let $\alpha=\frac{1}{\sqrt{k}}(\frac{k}{n})^{k-\epsilon}$ and $\beta=n^{-\frac{1}{2}-\epsilon}$. 
By \propref{thm:AsyGaussianPool}, $\forall \epsilon\in(0, \frac{2k-1}{4}),\forall\delta\in(0,1)$, $\exists N_1(\epsilon,\frac{\delta}{2})$ such that $\forall n\ge N_1$, $\P{}{R(\hat \theta_S)>\beta}> 1-\frac{\delta}{2}$. 
By \thmref{thm:Gaussian}, $\exists N_2(k, \epsilon,\frac{\delta}{2})$ such that $\forall n\ge N_2$, $\P{}{R(\hat \theta_{B_k(S)})<\alpha}>1-\frac{\delta}{2}$. Let $c_n=\frac{k^{k-\epsilon}}{\sqrt{k}}n^{-k+\frac{1}{2}+2\epsilon}$. 
Since $\epsilon<\frac{2k-1}{4}$, $c_n$ is a decreasing sequence in $n$ with $\lim_{n\rightarrow\infty}c_n=0$. Let $N_3(k, \epsilon)$ be the first integer such that $c_{N_3}\le1$.
Let $N(k, \epsilon, \delta)=\max\{N_1(\epsilon,\frac{\delta}{2}), N_2(k, \epsilon,\frac{\delta}{2}), N_3(k, \epsilon)\}$. By a union bound $\forall n\ge N(k, \epsilon, \delta)$, $\P{}{R(\hat\theta_{B_k(S)})<\alpha, R(\hat\theta_S)\ge \beta}>1-\delta$.
Since $\frac{\alpha}{\beta}=c_n$, we have $\P{}{R(\hat\theta_{B_k(S)})\le c_n R(\hat\theta_S)}>1-\delta$, where $c_n\le c_{N_3}\le1$.
\end{proof}

\section{Analysis on Super Teaching for 1D Large Margin Classifier}
\label{sec:midpoint}
We present our second theoretical result, this time on teaching a 1D large margin classifier. Let $\X=[-1,1]$, $\Y=\{-1,1\}$, $\Theta=[-1,1]$, $\theta^*=0$, 
$p_{\Z}(x, y)=p_{\Z}(x)p_{\Z}(y\mid x)$ where $p_{\Z}(x)=U(\X)$ and $p_{\Z}(y=1\mid x)=\ind{x\ge\theta^*}$. Let 
$x_- \defeq \max_{i:y_i=-1} x_i$ and
$x_+ \defeq \min_{i:y_i=+1} x_i$
be the inner-most pair of opposite labels in $S$ if they exist.
We formally define the large margin classifier  $A_{lm}(S)$ as
\begin{equation}
\hat\theta_S = A_{lm}(S)=\left\{
\begin{array}{ll}
(x_- + x_+)/2 & \mbox{ if $x_-$, $x_+$ exist} \\
-1 & \mbox{ if $S$ all positive} \\
1 & \mbox{ if $S$ all negative.}
\end{array}
\right.
\end{equation}
The risk is defined as $R(\hat\theta_S)=|\hat\theta_S-\theta^*|=|\hat\theta_S|$. The teacher we consider is the most symmetric teacher, who selects the most symmetric pair about $\theta^*$ in $S$ and gives it to the learner. 
We define the most-symmetric teacher $B_{ms}$:  
\begin{equation}
B_{ms}(S) = \left\{
\begin{array}{ll}
\{(s_-,-1), (s_+,1)\} & \mbox{ if } s_-,s_+ \mbox{ exist}, \\
\{(x_1,y_1)\} & \mbox{ otherwise.}
\end{array}
\right.
\label{eq:Bms}
\end{equation}
where $(s_-,s_+) \in \argmin_{(x,-1),(x',1) \in S} |\frac{x+x'}{2}-\theta^*|$.

Our main result shows that learning from the whole set $S$ achieves the well-known $O(1/n)$ risk, but surprisingly $B_{ms}$ achieves $O(1/n^2)$ risk, therefore it is an approximately $c_n=O(n^{-1})$ super teaching ratio.
\begin{restatable}{theorem}{Ampthm}
\label{thm:Ampthm}
Let $S$ be an $n$-item $iid$ sample drawn from $p_{\Z}$. Then $\forall \delta\in(0,1)$, $\exists N(\delta)$ such that $\forall n\ge N$, $\P{}{R(\hat\theta_{B_{ms}(S)})\le c_nR(\hat\theta_S)}>1-\delta$, where $c_n=\frac{32}{n\delta}\ln\frac{6}{\delta}$.
\end{restatable}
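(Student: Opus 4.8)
The plan is to bound the numerator $R(\hat\theta_{B_{ms}(S)})$ and the denominator $R(\hat\theta_S)$ of the ratio separately — each failing with probability a small constant times $\delta$ — and combine them by a union bound; a single $\delta$-free conditioning step handles the random labels. Write $m$ for the number of positive points of $S$. By Hoeffding, $\mathcal E\defeq\{n/4\le m\le 3n/4\}$ has probability $\ge1-2e^{-n/8}$, which is negligible for large $n$, and on $\mathcal E$ both $x_\pm$ and $s_\pm$ exist, so the degenerate branches of $A_{lm}$ and $B_{ms}$ never fire. Conditioned on $m$, the positive magnitudes $V\defeq\{x_i:x_i\ge0\}$ and the reflected negative magnitudes $W\defeq\{-x_i:x_i<0\}$ are two independent i.i.d.\ $\mathrm{Uniform}[0,1]$ samples of sizes $m$ and $n-m$, and with $A\defeq\min V$, $B\defeq\min W$ one has $R(\hat\theta_S)=\tfrac12|A-B|$ and $R(\hat\theta_{B_{ms}(S)})=\tfrac12\min_{v\in V,\,w\in W}|v-w|$.

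For the denominator I would lower-bound $|A-B|$. Here $A,B$ are independent with decreasing densities $f_A(a)=m(1-a)^{m-1}$, $f_B(b)=(n-m)(1-b)^{n-m-1}$. Restricting to $\{A\le\tfrac12\}$ (complement probability $2^{-m}$) keeps the argument of $f_B$ bounded away from $1$, so $\int_{a-t}^{a+t}f_B\le 2t\,e^{2(n-m)t}f_B(a)$, and integrating the joint density over $\{|a-b|<t\}$ yields $\P{}{|A-B|<t}\le 2t\,e^{2nt}\!\int_0^1 f_Af_B+o(1)$. Since $\int_0^1 f_Af_B=m(n-m)/(n-1)\le \tfrac14 n^2/(n-1)$, this is at most $\tfrac12 tn\,e^{2nt}(1+o(1))$; choosing $t$ a small constant multiple of $\delta/n$ drives it below $\delta/3$ for $n$ large, so $R(\hat\theta_S)\ge L$ with probability $\ge1-\delta/3$, with $L=\Theta(\delta/n)$.

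For the numerator I would show $\min_{v,w}|v-w|\le t'$ with probability $\ge1-\delta/3$ for $t'=\Theta(n^{-2}\ln\tfrac1\delta)$. Conditioning on $V$, the event $\{\min_{v,w}|v-w|>t'\}$ forces $W$ to avoid $N(V)\defeq\bigcup_{v\in V}(v-t',v+t')$, which given $V$ has conditional probability $(1-|N(V)\cap[0,1]|)^{n-m}\le\exp\!\big(-(n-m)\,|N(V)\cap[0,1]|\big)$. The crux is to lower-bound $|N(V)\cap[0,1]|$ while excluding the bad case in which all of $V$ clumps so tightly that this neighbourhood is tiny: I would place $K=\lfloor1/(3t')\rfloor$ disjoint length-$t'$ intervals spaced $3t'$ apart, observe that every occupied one lies entirely inside $N(V)$, so $|N(V)\cap[0,1]|\ge t'Z$ with $Z$ the number of occupied intervals. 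Since $t'm=o(1)$, $\E{}{Z}=K\big(1-(1-t')^m\big)$ is at least a fixed fraction of $m$, and $Z$ has bounded differences, so McDiarmid gives $Z\ge\tfrac18 m$ off an event of probability $\le\delta/6$; on that event, and on $\mathcal E$ (where $m(n-m)\ge 3n^2/16$), the conditional probability above is $\exp(-\Theta(n^2t'))$, which $t'=\Theta(n^{-2}\ln\tfrac1\delta)$ pushes below $\delta/6$. Hence $R(\hat\theta_{B_{ms}(S)})\le U\defeq\tfrac12 t'=\Theta(n^{-2}\ln\tfrac1\delta)$ with probability $\ge1-\delta/3$.

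A union bound then gives, off an event of probability $\le\delta$, that $R(\hat\theta_{B_{ms}(S)})/R(\hat\theta_S)\le U/L=\Theta\!\big(\ln(1/\delta)/(\delta n)\big)$; tracking the constants in $L$ and $U$ carefully — and bookkeeping how the $\delta$-splittings produce the factor $\ln(6/\delta)$ — gives exactly $c_n=\tfrac{32}{n\delta}\ln\tfrac6\delta$, and taking $N(\delta)\ge\tfrac{32}{\delta}\ln\tfrac6\delta$ also forces $c_n\le1$. The main obstacle I expect is precisely the lower bound on $|N(V)\cap[0,1]|$: ruling out tight clumping of the positive points is exactly what yields the $\Theta(1/n^2)$ rate for $B_{ms}$ — i.e.\ the super-teaching gap — and that improvement over the trivial $\Theta(1/n)$ genuinely uses that there are $\Theta(n^2)$ opposite-label pairs, not merely the single innermost one. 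A secondary nuisance is keeping the constants tight enough (the $e^{2nt}$ distortion, the factor $m(n-m)/(n-1)$, the slack in McDiarmid) to land on the stated $c_n$ rather than an $O(\cdot)$ version of it.
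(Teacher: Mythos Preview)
Your overall plan---bound $R(\hat\theta_{B_{ms}(S)})$ from above and $R(\hat\theta_S)$ from below, each with failure probability a fraction of $\delta$, then take the ratio---is exactly the paper's strategy. The differences are in how the two pieces are obtained.

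For the lower bound on $R(\hat\theta_S)$, the paper does not condition on the label split and estimate the density of $|A-B|$. Instead it computes the tail in closed form: for $0<\epsilon\le\tfrac12$ one has $\P{}{R(\hat\theta_S)>\epsilon}=(1-\epsilon)^n+\epsilon^n$, whence $\P{}{R(\hat\theta_S)>\delta/(2n)}>(1-\delta/(2n))^n>1-\delta/2$ immediately. Your density argument is correct, but the $e^{2nt}$ distortion and the bound $m(n-m)/(n-1)\le n^2/(4(n-1))$ already lose a constant factor relative to this.

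For the upper bound on $R(\hat\theta_{B_{ms}(S)})$, the paper's argument is close in spirit to yours: it also partitions $[0,1]$ into short cells, shows many cells are occupied by one label class, and then shows a reflected point of the other class hits an occupied cell. The difference is that the paper uses $N=\lfloor m^2/\ln(3/\delta)\rfloor$ equal segments and bounds the number of occupied segments by a direct coupon-collector union bound (no McDiarmid), yielding $R(\hat\theta_{B_{ms}(S)})\le \tfrac{16}{n^2}\ln\tfrac{6}{\delta}$ with probability $>1-\delta/2$. Your spacing-plus-McDiarmid variant works too, but the slack (the $\tfrac18 m$ from McDiarmid, the $3t'$ spacing, the $m(n-m)\ge 3n^2/16$ on $\mathcal E$) again costs constants.

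Consequently your sketch proves the theorem with $c_n=\Theta\!\big(\ln(1/\delta)/(n\delta)\big)$, but your claim that careful bookkeeping recovers \emph{exactly} $c_n=\tfrac{32}{n\delta}\ln\tfrac{6}{\delta}$ is optimistic: both of your sub-arguments are strictly looser than the paper's, and the specific constant $32$ comes from the exact tail formula on the denominator side combined with the $\tfrac{16}{n^2}\ln\tfrac{6}{\delta}$ numerator bound, neither of which your route reproduces. If you want the stated constant, replace your density estimate by the exact computation of $\P{}{R(\hat\theta_S)>\epsilon}$ and tighten the occupancy argument as above; otherwise your proof yields the same theorem with a larger absolute constant.
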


Before proving~\thmref{thm:Ampthm}, we first show that $B_{ms}$ is an optimal teacher for the large margin classifier.
\begin{restatable}{proposition}{propBms}
$B_{ms}$ is an optimal teacher for the large margin classifier $\hat\theta_S$.
\end{restatable}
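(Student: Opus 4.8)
The plan is to show directly that $B_{ms}(S)$ attains $\min_{T\subseteq S} R(\hat\theta_{A_{lm}(T)})$, the smallest risk that any subset of the training set can induce on the learner; this is precisely what it means for $B_{ms}$ to be an optimal teacher. The structural fact that makes this easy is that the labels here are deterministic about $\theta^*=0$: $y_i=-1$ forces $x_i<0$ and $y_i=+1$ forces $x_i\ge 0$. Hence, for any subset $T$ that contains at least one point of each label, $A_{lm}(T)=(x_-^T+x_+^T)/2$, where $x_-^T$ is the largest negative point of $T$ and $x_+^T$ the smallest positive point of $T$; both of these are points of $S$, so $|\hat\theta_T|=|(x_-^T+x_+^T)/2|<1$. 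Moreover, adding more points to a mixed subset can only move $x_-^T$ and $x_+^T$ away from $0$, never toward it.

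First I would establish achievability: given any negative point $(a,-1)\in S$ and any positive point $(b,1)\in S$, the two-point subset $T=\{(a,-1),(b,1)\}$ has $(a,b)$ as its inner-most opposite-label pair, so $R(\hat\theta_T)=|(a+b)/2|$. Therefore the set of risks realizable by mixed subsets of $S$ is exactly $\{\,|(a+b)/2| : (a,-1)\in S,\ (b,1)\in S\,\}$, and by the previous paragraph no mixed subset of $S$ can achieve a risk smaller than $\min_{(a,-1),(b,1)\in S}|(a+b)/2|$, which is exactly $R(\hat\theta_{B_{ms}(S)})$ by the definition of $(s_-,s_+)$ in~\eqref{eq:Bms}. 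Next I would dispose of the non-mixed subsets: an all-positive subset gives $\hat\theta_T=-1$ and an all-negative subset gives $\hat\theta_T=1$, both with risk $1$, which the bound $|\hat\theta_T|<1$ shows is strictly worse than any mixed subset. Consequently, whenever $S$ contains points of both labels, $B_{ms}(S)$ is optimal. In the complementary case where $S$ is label-pure, so that $s_-$ and $s_+$ do not both exist, every nonempty subset of $S$ is also pure and incurs risk $1$, and the singleton $\{(x_1,y_1)\}$ returned by $B_{ms}$ in this case likewise incurs risk $1$; hence $B_{ms}(S)$ is again optimal.

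The argument is essentially case bookkeeping rather than hard analysis. The one place that needs a carefully worded sentence is the achievability claim — checking that restricting $T$ to the two selected points genuinely makes $(a,b)$ the inner-most opposite-label pair that $A_{lm}$ uses — together with a clean treatment of the degenerate cases (label-pure $S$, and fixing the convention for the empty subset) so that the claimed minimum is actually attained rather than merely approached. I do not expect any real obstacle beyond organizing these cases.
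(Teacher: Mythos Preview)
Your approach is essentially the paper's: for any mixed subset $T\subseteq S$, its inner-most opposite-label pair $(x_-^T,x_+^T)$ lies in $S$ and is therefore a candidate in the minimization defining $(s_-,s_+)$, so $R(\hat\theta_{B_{ms}(S)})\le R(\hat\theta_T)$; label-pure subsets give risk $1$ and are dominated. One small correction: your remark that ``adding more points to a mixed subset can only move $x_-^T$ and $x_+^T$ away from $0$'' is false (adding a negative point closer to $0$ moves $x_-^T$ toward $0$), but you never actually use this claim---the argument relies only on the fact that $(x_-^T,x_+^T)$ is some opposite-label pair in $S$---so the proof stands.
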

\begin{proof}
We show $R(\hat\theta_{B_{ms}(S)})\le R(\hat\theta_{B(S)})$ for any $ B$ and any $ S$. 

If $|B_{ms}(S)|=1$, then $S$ is either all positive or all negative. 
In both cases $R(\hat\theta_{B(S)})=1$ for any $B$ by definition.
Thus $R(\hat\theta_{B_{ms}(S)})\le R(\hat\theta_{B(S)})$.

Otherwise $|B_{ms}(S)|=2$, then if $B(S)$ is all positive or all negative, we have $R(\hat\theta_{B(S)})=1$ and thus $R(\hat\theta_{B_{ms}(S)})\le R(\hat\theta_{B(S)})$. Otherwise let $x^B_-, x^B_+$ be the inner most pair of $B(S)$. Since $x^B_-, x^B_+\in S$, then by definition of $B_{ms}$, $R(\hat\theta_{B_{ms}(S)})=|\frac{s_-+s_+}{2}-\theta^*|\le |\frac{x^B_-+x^B_+}{2}-\theta^*|=R(\hat\theta_{B(S)})$.
\end{proof}
Now we show that learning on the whole $S$ incurs $O(n^{-1})$ risk. First, we give the following lemma for the exact tail probability of $R(\hat\theta_S)$.

\begin{restatable}{lemma}{lemAmp}
\label{lem:Amp}
For the large margin classifier $\hat\theta_S$, we have
\begin{equation}\label{eq:Amptail}
\P{}{R(\hat\theta_S)>\epsilon}=\left\{
\begin{aligned}
&(1-\epsilon)^n+(\epsilon)^n &&\text{ $0<\epsilon\le\frac{1}{2}$}\\
&(\frac{1}{2})^{n-1} &&\text{ $\frac{1}{2}<\epsilon<1$}\\
&0 &&\mbox{ $\epsilon=1$}.
\end{aligned}
\right.
\end{equation}
\end{restatable}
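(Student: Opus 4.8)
The plan is to dispatch the two easy ranges of $\epsilon$ directly, then for $0<\epsilon\le\tfrac12$ reduce the claim to a clean question about labeled order statistics of i.i.d.\ uniforms and finish with a one-variable integral. First I would record the range of $A_{lm}$: in the mixed case $x_-\in[-1,0)$ and $x_+\in[0,1]$, so $\hat\theta_S=(x_-+x_+)/2\in(-\tfrac12,\tfrac12)$ almost surely, while if $S$ is all positive or all negative then $R(\hat\theta_S)=1$; in every case $R(\hat\theta_S)\le1$. Hence $\{R(\hat\theta_S)>1\}=\varnothing$, giving the case $\epsilon=1$; and for $\tfrac12<\epsilon<1$ we get $\{R(\hat\theta_S)>\epsilon\}=\{S\text{ all positive}\}\cup\{S\text{ all negative}\}$, of probability $2\cdot2^{-n}=2^{1-n}$ since $\P{}{x_i\ge0}=\tfrac12$.

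For $0<\epsilon\le\tfrac12$, by the symmetry $x_i\mapsto-x_i$ (which preserves $p_\Z^n$, swaps labels, and sends $\hat\theta_S$ to $-\hat\theta_S$) the disjoint events $\{\hat\theta_S>\epsilon\}$ and $\{\hat\theta_S<-\epsilon\}$ are equiprobable, so $\P{}{R(\hat\theta_S)>\epsilon}=2\,\P{}{\hat\theta_S>\epsilon}$. Now set $v_i=|x_i|$: the $v_i$ are i.i.d.\ $U[0,1]$, each carrying an independent fair label $\ell_i\in\{+,-\}$ with $\ell_i=+$ iff $x_i\ge0$. Since $x_+=\min\{v_i:\ell_i=+\}$ and $-x_-=\min\{v_i:\ell_i=-\}$, with the convention $\min\varnothing=+\infty$ (which correctly captures the all-negative branch $\hat\theta_S=1$),
\[
\{\hat\theta_S>\epsilon\}=\bigl\{\min\{v_i:\ell_i=+\}>\min\{v_i:\ell_i=-\}+2\epsilon\bigr\}.
\]
A short argument then identifies this with the event that no $+$-labeled point lies in the window $[\,v_{(1)},\,v_{(1)}+2\epsilon\,]$, i.e.\ that every $v_i$ falling in that window carries the label $-$ (so in particular the overall minimum does).

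It remains to compute that probability. Condition on $v_{(1)}=m$, whose density on $[0,1]$ is $n(1-m)^{n-1}$; given this, the other $n-1$ points are i.i.d.\ $U[m,1]$, so the number $K$ of them in $[m,m+2\epsilon]$ is $\mathrm{Binomial}(n-1,q)$ with $q=\min(2\epsilon,1-m)/(1-m)$, and the conditional probability that all $K+1$ window points are labeled $-$ is $2^{-(K+1)}$, with conditional mean $\tfrac12(1-q/2)^{n-1}$. I would split the $m$-integral at $m=1-2\epsilon$: on $[0,1-2\epsilon]$ one has $1-q/2=(1-m-\epsilon)/(1-m)$, the factors $(1-m)^{n-1}$ cancel, and the substitution $t=1-m-\epsilon$ gives $\tfrac12\int_\epsilon^{1-\epsilon}n\,t^{n-1}\,dt=\tfrac12\bigl((1-\epsilon)^n-\epsilon^n\bigr)$; on $(1-2\epsilon,1]$ one has $q=1$ and the contribution is $2^{-n}\int_{1-2\epsilon}^1 n(1-m)^{n-1}\,dm=2^{-n}(2\epsilon)^n=\epsilon^n$. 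Summing gives $\P{}{\hat\theta_S>\epsilon}=\tfrac12\bigl((1-\epsilon)^n+\epsilon^n\bigr)$, and doubling yields $(1-\epsilon)^n+\epsilon^n$.

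The one genuinely delicate step is the change of variables above: translating the definition of $A_{lm}$ --- including both the all-positive and all-negative branches --- into the statement about labeled order statistics, while keeping track of the conventions ($\min\varnothing$), of strict versus non-strict inequalities, and of almost-surely negligible ties. Once that dictionary is in place, everything else is routine bookkeeping and integration.
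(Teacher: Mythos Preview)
Your proof is correct and takes a genuinely different route from the paper's. The paper computes $\P{}{|\hat\theta_S|\le\epsilon}$ by conditioning on the number $i$ of positive points in $S$, writing down the joint density of $(-x_-,x_+)$ given $i$, integrating it over the strip $|\epsilon_1-\epsilon_2|\le 2\epsilon$ in $[0,1]^2$, and then summing over $i$; the sum over $i$ collapses via the binomial theorem to $(2-\epsilon_1-\epsilon_2)^{n-2}$, after which a somewhat laborious two-variable integral (split into complementary regions) produces the answer. You instead (i) exploit the reflection $x\mapsto -x$ to reduce to $2\,\P{}{\hat\theta_S>\epsilon}$, (ii) pass to $v_i=|x_i|$ with independent fair labels, decoupling the order structure from the sign pattern, and (iii) condition only on $v_{(1)}$, reducing everything to a single one-dimensional integral of a binomial probability-generating-function value. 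Your approach avoids both the case split on $i$ and the double integral; the paper's approach is more brute-force from the definitions but computationally heavier. Both are valid; yours is shorter and makes transparent where the $\epsilon^n$ term comes from (it is exactly the contribution of the event $v_{(1)}>1-2\epsilon$, where the window $[v_{(1)},v_{(1)}+2\epsilon]$ swallows all $n$ points), whereas in the paper that term emerges only after several algebraic cancellations.
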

The proof for~\lemref{lem:Amp} is in the appendix.

Now we show that $R(\hat\theta_S)$ is $O(n^{-1})$.  
\begin{restatable}{theorem}{thmAmp}
\label{thm:Amp}
Let $S$ be an $n$-item $iid$ sample drawn from $p_{\Z}$. Then $\forall \delta\in(0,1)$ and $\forall n\ge2$,
\begin{equation}
\P{}{R(\hat\theta_S)>\frac{\delta}{n}}>1-\delta.
\end{equation}
\end{restatable}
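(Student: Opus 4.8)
The plan is to invoke the exact tail formula of \lemref{lem:Amp} at the specific threshold $\epsilon=\delta/n$ and then reduce everything to Bernoulli's inequality. First I would verify that this threshold falls in the first regime of \eqnref{eq:Amptail}: since $\delta\in(0,1)$ and $n\ge 2$, we have $0<\delta/n<1/n\le 1/2$, so \lemref{lem:Amp} applies in its first branch and gives
\begin{equation}
\P{}{R(\hat\theta_S)>\tfrac{\delta}{n}}=\Bigl(1-\tfrac{\delta}{n}\Bigr)^{n}+\Bigl(\tfrac{\delta}{n}\Bigr)^{n}.
\end{equation}

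Since $(\delta/n)^{n}>0$, it suffices to prove $\bigl(1-\delta/n\bigr)^{n}\ge 1-\delta$. This is immediate from Bernoulli's inequality $(1+y)^{n}\ge 1+ny$ (valid for integer $n\ge 1$ and $y\ge -1$), applied with $y=-\delta/n$, which satisfies $y\ge -1$ because $\delta/n<1$; it yields $\bigl(1-\delta/n\bigr)^{n}\ge 1-n\cdot(\delta/n)=1-\delta$. Putting the pieces together, $\P{}{R(\hat\theta_S)>\delta/n}\ge 1-\delta+(\delta/n)^{n}>1-\delta$, which is exactly the claim.

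There is essentially no serious obstacle here. The only points that require care are (i) confirming $\delta/n\le 1/2$ so that the correct branch of \lemref{lem:Amp} is used, and (ii) noticing that the \emph{strict} inequality demanded by the statement is supplied for free by the strictly positive term $(\delta/n)^{n}$, so that the weak bound $\bigl(1-\delta/n\bigr)^{n}\ge 1-\delta$ is all that must be argued. If one prefers not to cite Bernoulli's inequality by name, the bound $\bigl(1-\delta/n\bigr)^{n}\ge 1-\delta$ also follows from convexity of $t\mapsto t^{n}$ on $[0,1]$ or by a one-line induction on $n$.
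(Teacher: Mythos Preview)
Your proof is correct and essentially identical to the paper's: both verify that $\delta/n\le 1/2$ so the first branch of \lemref{lem:Amp} applies, then use Bernoulli's inequality $(1-\epsilon)^n\ge 1-n\epsilon$ with $\epsilon=\delta/n$. The only cosmetic difference is that the paper drops the $(\delta/n)^n$ term and invokes the \emph{strict} form of Bernoulli (valid since $n\ge 2$ and $\epsilon\neq 0$), whereas you keep that term to supply the strictness; either way the argument is the same.
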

\begin{proof}
According to~\lemref{lem:Amp}, for $\epsilon\le\frac{1}{2}$, we have
\begin{equation}\label{lowerlem2}
\P{}{R(\hat\theta_S)>\epsilon}>(1-\epsilon)^n>1-n\epsilon.
\end{equation}
Note that $n\ge2$, thus $\frac{\delta}{n}\le\frac{1}{2}$. Let $\epsilon=\frac{\delta}{n}$ in~\eqref{lowerlem2} we have
\begin{equation}
\begin{aligned}
&\P{}{R(\hat\theta_S)>\frac{\delta}{n}}>1-n\frac{\delta}{n}=1-\delta.
\end{aligned}
\end{equation}
\end{proof}
Now we work out the risk of the most symmetric teacher $B_{ms}$. 
To bound the risk of $B_{ms}$ we need the following key lemma, which shows that the sample complexity with the teacher is $O(\epsilon^{-1/2})$.

\begin{restatable}{lemma}{lemAmshighP}
\label{lem:Ams_highP}
Let $n=4m$, where $m$ is an integer. Let $S$ be an $n$-item $iid$ sample drawn from $p_{\Z}$. $\forall\epsilon>0, \forall\delta\in(0,1)$, $\exists \M(\epsilon, \delta)=\max\{\frac{3e}{\ln4-1}\ln\frac{3}{\delta}, (\frac{1}{\epsilon}\ln\frac{3}{\delta})^{\frac{1}{2}}\}$ such that $\forall m\ge\M (\epsilon, \delta)$, $\P{}{R(\hat\theta_{B_{ms}(S)})\le\epsilon}>1-\delta$.
\end{restatable}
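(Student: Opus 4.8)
The plan is to reduce the risk to the smallest ``opposite-sign spacing'' of the order statistics of $|x_1|,\dots,|x_n|$, show that with the required probability this spacing is $O(\ln(1/\delta)/m^2)$, and observe that the resulting bound is $\le\epsilon$ exactly when $m^2\gtrsim\tfrac1\epsilon\ln\tfrac3\delta$ (the second term of $\M$), while the ambient high-probability events contribute the first term. For the reduction, write $P=\{x_i:x_i>0\}$ and $Q=\{-x_i:x_i<0\}$, two multisets in $(0,1]$. Unless all the $x_i$ share a sign (probability $2^{1-n}$, which we fold into the $\delta$-budget), $B_{ms}$ returns the pair $(s_-,s_+)$ and, straight from the definitions of $A_{lm}$, $R$, and $B_{ms}$, $R(\hat\theta_{B_{ms}(S)})=\bigl|\tfrac{s_-+s_+}{2}\bigr|=\tfrac12\min_{p\in P,\,q\in Q}|p-q|$. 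So it suffices to prove a tail bound on $\mathrm{dist}(P,Q)\defeq\min_{p\in P,q\in Q}|p-q|$.

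The key structural facts are that $|x_1|,\dots,|x_n|$ are i.i.d.\ $U(0,1]$ while $\mathrm{sign}(x_i)$ is an independent fair coin, independent of $|x_i|$. Sorting the $|x_i|$, let $g_1,\dots,g_{n-1}$ be the consecutive spacings (standard uniform spacings: negatively associated, each marginally $\mathrm{Beta}(1,n)$). Then $\mathrm{dist}(P,Q)$ is the smallest spacing whose two endpoints have opposite signs, and since the sign sequence along the sorted order is independent of the positions, the opposite-sign indicators of the \emph{odd-indexed} spacings $g_1,g_3,g_5,\dots$ are i.i.d.\ $\mathrm{Bernoulli}(\tfrac12)$ and independent of all the $g_j$. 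Fixing $t>0$ and an integer $\ell$ and conditioning on the positions,
\[
\P{}{\mathrm{dist}(P,Q)>t}\le\P{}{\text{no odd-indexed spacing}\le t\text{ is opposite-signed}}=\E{}{2^{-D_t}}\le 2^{-\ell}+\P{}{D_t<\ell},
\]
where $D_t=\#\{\text{odd }j:g_j\le t\}$. Since $g_j\sim\mathrm{Beta}(1,n)$, $\E{}{D_t}\ge\tfrac{n-1}{2}\bigl(1-(1-t)^n\bigr)\ge c_0 n^2 t$ for $t\le 1/n$; choosing $t=\Theta(\ell/n^2)$ forces $\E{}{D_t}\ge 2\ell$, and negative association of the spacings gives a Chernoff bound $\P{}{D_t<\ell}\le e^{-\Omega(\ell)}$, so with $\ell=\Theta(\ln(1/\delta))$ the right-hand side is $\le\delta/3$ and $\mathrm{dist}(P,Q)=O(\ln(1/\delta)/m^2)$ off the bad set.

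Two ambient events remain. To keep the sign sequence non-degenerate I would first condition on the event that at least $m$ of the $4m$ samples are positive and at least $m$ are negative; this is the lower-tail event $\{\mathrm{Bin}(4m,\tfrac12)<m\}$ (together with its mirror), and the crude estimate $(m+1)\binom{4m}{m}2^{-4m}\le(m+1)e^{-m(\ln4-1)}\le\delta/3$ holds as soon as $m\ge\frac{3e}{\ln4-1}\ln\frac3\delta$, which is exactly the first term of $\M$; under this conditioning the odd-indexed opposite-sign indicators are $\mathrm{Bernoulli}(\ge\tfrac38)$ and still negatively associated, so the display above goes through with $2^{-\ell}$ replaced by $(5/8)^\ell$. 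A union bound over (all-same-sign) $\cup$ (too few of one sign) $\cup$ (the $D_t$ event) caps the total failure probability at $\delta$, and requiring $m$ to also exceed $(\tfrac1\epsilon\ln\tfrac3\delta)^{1/2}$ makes $O(\ln(1/\delta)/m^2)\le\epsilon$. The main obstacle is quantitative: the negative-association concentration of $D_t$ (and the binomial tail) must be pushed through with constants sharp enough to land on the stated $\M$, and the configurations with very few points of one sign must be genuinely absorbed by the conditioning step rather than swept aside. A technically lighter but constant-looser substitute for the middle paragraph is a bucketing argument: split $(0,1]$ into $\Theta(m)$ equal sub-intervals, use bounded differences to show $\Theta(m)$ of them contain both a $P$- and a $Q$-point with probability $\ge1-\delta/3$, note that inside each such bucket the closest opposite-label pair lies within $s$ except with probability $\le 1-\Theta(ms)$ independently across buckets given the per-bucket counts, and multiply over the mixed buckets to get $\P{}{\mathrm{dist}(P,Q)>s}\le e^{-\Theta(m^2 s)}+\delta/3$, again yielding $\mathrm{dist}(P,Q)=O(\ln(1/\delta)/m^2)$.
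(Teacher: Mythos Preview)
Your core reduction---writing $R(\hat\theta_{B_{ms}(S)})=\tfrac12\,\mathrm{dist}(P,Q)$ and bounding the minimum opposite-sign spacing of the order statistics of $|x_i|$ via negative association of uniform spacings---is sound and is a genuinely different route from the paper's. The paper does not touch order statistics or NA at all: it partitions $[0,1]$ into $N=\lfloor m^2/\ln(3/\delta)\rfloor$ equal segments, uses a crude coupon-collector estimate to show that the larger-sign class (size $\ge 2m$) occupies at least $m$ distinct segments with probability $\ge 1-\delta/3$, and then observes that each of the $\ge m$ flipped points of the other class lands in an occupied segment with probability $\ge m/N$, so the chance of no hit is at most $(1-m/N)^m\le e^{-m^2/N}\le\delta/3$. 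A hit places a $P$-point and a $Q$-point in the same segment, hence within $1/N\le 2\epsilon$. This is entirely elementary---no Chernoff for dependent sums, no NA---and the three $\delta/3$ budgets ($E_1,E_2,E_3$) line up explicitly with the two terms of $\M$. Your approach is conceptually cleaner (it uses the full independence of signs and magnitudes to get at $\mathrm{dist}(P,Q)$ directly) but imports heavier tools, and as you yourself note, recovering the \emph{exact} constants in $\M$ from the NA-Chernoff step would take real work.

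There is, however, one genuine misstep. Your display $\P{}{\mathrm{dist}(P,Q)>t}\le\E{}{2^{-D_t}}$ already holds \emph{unconditionally}: when all signs agree every sign-change indicator is zero and the right side is $1$, so the degenerate case is absorbed automatically, and the only residual bad event is $P=\emptyset$ or $Q=\emptyset$, of probability $2^{1-4m}$, which is far below $\delta/3$ under either term of $\M$. Consequently the conditioning on $\{|S_1|\ge m,\,|S_2|\ge m\}$ is unnecessary for your argument, and the assertion that ``under this conditioning the odd-indexed opposite-sign indicators are $\mathrm{Bernoulli}(\ge\tfrac38)$ and still negatively associated'' is both unjustified and unneeded---you have traded a non-issue for a hard claim about the conditional law of a sign sequence that you never prove. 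Drop the conditioning entirely; your spacing argument then goes through cleanly, the first term of $\M$ becomes vestigial (which is fine, since the lemma only asserts that this particular $\M$ suffices), and the only remaining gap is the quantitative one you already flag.
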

\begin{proof}
We give a proof sketch and the details are in the appendix. Let $S_1=\{x\mid (x, 1)\in S\}$ and $S_2=\{x\mid (x, -1)\in S\}$ respectively. Then we have $|S_1|+|S_2|=4m$. Define event $E_1:\{|S_1|\ge m\land |S_2|\ge m\}$. Given that $m\ge \frac{3e}{\ln4-1}\ln\frac{3}{\delta}$, one can show $P(E_1)>1-\frac{\delta}{3}$. Since $|S_1|+|S_2|=4m$, either $|S_1|\ge2m$ or $|S_2|\ge2m$. Without loss of generality we assume $|S_1|\ge2m$. We then divide the interval [0, 1] equally into $N=\lfloor m^2(\ln\frac{3}{\delta})^{-1} \rfloor$ segments. The length of each segment is $\frac{1}{N}=O(\frac{1}{m^2})$ as Figure~\ref{segments_copy} shows.
\begin{figure}[H]
\centering
\includegraphics[width=3in,height=0.5in]{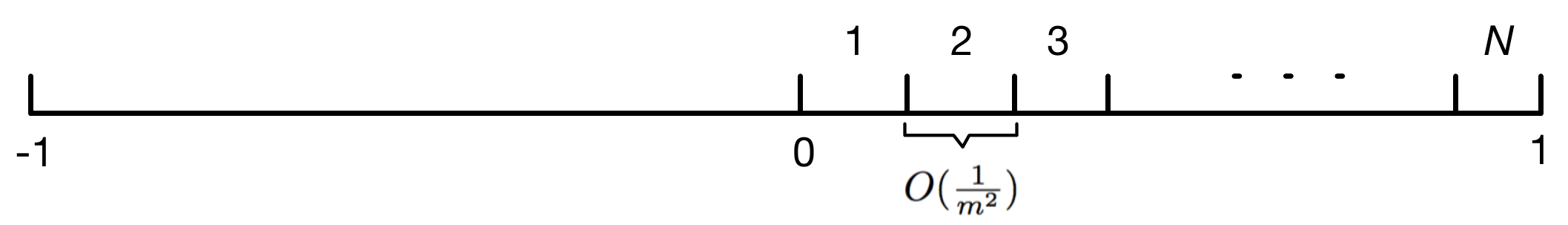}
\caption{segments}
\label{segments_copy}
\end{figure}
Let $N_o$ be the number of segments that are occupied by the points in $S_1$. Note that $N_o$ is a random variable.
Let $E_2$ be the event that $N_o\ge m$. Then one can show $P(E_2)>1-\frac{\delta}{3}$. By union bound, we have $P(E_1, E_2)>1-\frac{2\delta}{3}$. Let $E_3$ be the following event: there exist a point $x_2$ in $S_2$ such that $-x_2$, the flipped point, lies in the same segment as some point $x_1$ in $S_1$. One can show that $P(E_3\mid E_1, E_2)>1-\frac{\delta}{3}$. Thus $P(E_3)\ge P(E_1, E_2, E_3)=P(E_3\mid E_1, E_2)P(E_1, E_2)\ge (1-\frac{\delta}{3})(1-\frac{2\delta}{3})> 1-\delta$. If $E_3$ happens, then $|x_1+x_2|=|x_1-(-x_2)|\le\frac{1}{N}$. Note that $m\ge(\frac{1}{\epsilon}\ln\frac{3}{\delta})^{\frac{1}{2}}$ and $N=\lfloor m^2(\ln\frac{3}{\delta})^{-1} \rfloor\ge \frac{m^2}{2}(\ln\frac{3}{\delta})^{-1} $, thus $\frac{1}{N}\le \frac{2}{m^2}\ln\frac{3}{\delta}\le2\epsilon$. Therefore $R(\hat \theta_{B_{ms}(S)})=|\frac{s_-+s_+}{2}|\le|\frac{x_1+x_2}{2}|\le\epsilon$. 
\end{proof}

Rewriting $\epsilon$ in~\lemref{lem:Ams_highP} as a function of $n$, we have the following theorem.
\begin{restatable}{theorem}{thmAmsRes}
\label{thm:Ams}
Let $S$ be an $n$-item $iid$ sample dawn from $p_{\Z}$, then $\exists N_1(\delta)=\frac{12e}{\ln4-1}\ln\frac{3}{\delta}$ such that $\forall n\ge N_1$,
\begin{equation}
\P{}{R(\hat\theta_{B_{ms}(S)})\le\frac{16}{n^2}\ln\frac{3}{\delta}}>1-\delta.
\end{equation}
\end{restatable}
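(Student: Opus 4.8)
The plan is to obtain \thmref{thm:Ams} as an essentially immediate corollary of \lemref{lem:Ams_highP}, by inverting the ``target accuracy $\epsilon$ versus half-sample-size $m$'' trade-off stated there into an explicit rate in the sample size $n$, and handling non-multiples of $4$ by a monotonicity reduction.

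First I would take $n=4m$ and apply \lemref{lem:Ams_highP} with $\epsilon=\frac{16}{n^2}\ln\frac{3}{\delta}$. Because $n=4m$, this choice equals $\epsilon=\frac{1}{m^2}\ln\frac{3}{\delta}$, so $\big(\frac{1}{\epsilon}\ln\frac{3}{\delta}\big)^{1/2}=m$ and the sample-complexity threshold of the lemma collapses to
\begin{equation}
\M(\epsilon,\delta)=\max\Big\{\tfrac{3e}{\ln 4-1}\ln\tfrac{3}{\delta},\;m\Big\}
\end{equation}
whose second branch is automatically met. Hence $m\ge\M(\epsilon,\delta)$ holds precisely when $m\ge\frac{3e}{\ln 4-1}\ln\frac{3}{\delta}$, i.e.\ when $n=4m\ge\frac{12e}{\ln 4-1}\ln\frac{3}{\delta}=N_1(\delta)$, and for such $n$ the lemma yields $\P{}{R(\hat\theta_{B_{ms}(S)})\le\tfrac{16}{n^2}\ln\tfrac{3}{\delta}}>1-\delta$, which is the stated bound.

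For $n$ not divisible by $4$ I would reduce to this case by monotonicity of the most-symmetric teacher's risk under adding data: if $S'\subseteq S$ then $R(\hat\theta_{B_{ms}(S)})\le R(\hat\theta_{B_{ms}(S')})$, since every opposite-label pair present in $S'$ is also present in $S$ and a single-label $S'$ gives $R(\hat\theta_{B_{ms}(S')})=1$ — the same comparison used to establish optimality of $B_{ms}$. Applying the previous step to the first $4\lfloor n/4\rfloor$ draws (still $iid$ from $p_{\Z}$, and with $\lfloor n/4\rfloor\ge\frac{3e}{\ln 4-1}\ln\frac{3}{\delta}$ since $n\ge N_1(\delta)$) gives the bound with $4\lfloor n/4\rfloor$ in place of $n$, which exceeds $\frac{16}{n^2}\ln\frac{3}{\delta}$ only by the factor $(n/4\lfloor n/4\rfloor)^2=1+O(1/n)$; to land the constant exactly at $16$ one instead reruns the proof of \lemref{lem:Ams_highP} verbatim with ``$m$'' replaced by ``$n/4$'' throughout (split $[0,1]$ into $\lfloor n^2/(16\ln\frac{3}{\delta})\rfloor$ segments), which goes through unchanged and reproduces both the constant and the threshold $N_1(\delta)$. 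The analytic content is entirely inside \lemref{lem:Ams_highP}; the only thing needing care here is the arithmetic making the substitution $\epsilon=\frac{16}{n^2}\ln\frac{3}{\delta}$ annihilate the $\big(\frac{1}{\epsilon}\ln\frac{3}{\delta}\big)^{1/2}$ branch of $\M$, together with the mild rounding bookkeeping for $n\not\equiv 0\pmod 4$, so this is the (modest) main obstacle.
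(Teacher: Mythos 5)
Your proof is correct and follows essentially the same route as the paper: both invert the threshold $\M(\epsilon,\delta)$ of \lemref{lem:Ams_highP} at $\epsilon=\frac{16}{n^2}\ln\frac{3}{\delta}$, so that the requirement collapses to $m=\frac{n}{4}\ge\frac{3e}{\ln 4-1}\ln\frac{3}{\delta}$, i.e.\ $n\ge N_1(\delta)$. The only difference is that you explicitly handle $n\not\equiv 0\pmod 4$ via a monotonicity (or rerun-with-$n/4$) argument, a rounding issue the paper's one-line proof silently ignores.
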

\begin{proof}
Note that if $n\ge N_1(\delta)=\frac{12e}{\ln4-1}\ln\frac{3}{\delta}$, then $m=\frac{n}{4}\ge\frac{3e}{\ln4-1}\ln\frac{3}{\delta}$, thus the minimum $\epsilon$ that satisfies $m\ge\M (\epsilon, \delta)$ is $\frac{1}{m^2}\ln\frac{3}{\delta}=\frac{16}{n^2}\ln\frac{3}{\delta}$.
\end{proof}
Now we can conclude super teaching:
\begin{proof}[\textbf{Proof of~\thmref{thm:Ampthm}}]
According to~\thmref{thm:Ams}, $\exists N_1(\frac{\delta}{2})$ such that $\forall n\ge N_1$, $\P{}{R(\hat\theta_{B_{ms}(S)})\le\frac{16}{n^2}\ln\frac{6}{\delta}}>1-\frac{\delta}{2}$. Note that $N_1\ge2$, thus according to~\thmref{thm:Amp}, $\forall n\ge N_1$, $\P{}{R(\hat\theta_S)>\frac{\delta}{2n}}>1-\frac{\delta}{2}$.   Let $c_n=\frac{32}{n\delta}\ln\frac{6}{\delta}$ and $N_2(\delta)=\frac{32}{\delta}\ln\frac{6}{\delta}$ so that $c_{N_2}=1$. Let $N(\delta)=\max\{N_1(\delta), N_2(\delta)\}$. By union bound, $\forall n\ge N$, with probability at least $1-\delta$, we have both $R(\hat\theta_S)>\frac{\delta}{2n}$ and $R(\hat\theta_{B_{ms}(S)})\le\frac{16}{n^2}\ln\frac{6}{\delta}$, which gives $\P{}{R(\hat\theta_{B_{ms}(S)})\le c_nR(\hat\theta_S)}>1-\delta$, where $c_n\le c_{N_2}=1$.
\end{proof}

\section{An MINLP Algorithm for Super Teaching}
\label{sec:MINLP}

Although the problem of proving super teaching ratios for a specific learner is interesting, we now focus on an algorithm to find a super teaching set for general learners \emph{given} a training set $S$.
That is, we find a subset $B(S) \subset S$ so that $R(\hat\theta_{B(S)}) < R(\hat\theta_S)$.
We start by formulating super teaching as a subset selection problem.
To this end, we introduce binary indicator variables $b_1, \ldots, b_n$ where $b_i=1$ means $z_i \in S$ is included in the subset.
We consider learners $A$ that can be defined via convex empirical risk minimization:
\begin{equation}\label{learner}
A(S) \defeq \argmin_{\theta \in \Theta} \sum_{i=1}^n \tilde\ell(\theta, z_i) + \frac{\lambda}{2} \|\theta\|^2.
\end{equation}
For simplicity we assume there is a unique global minimum which is returned by $\argmin$. 
Note that we use $\tilde\ell$ in~\eqref{learner} to denote the (surrogate) convex loss used by $A$ in performing empirical risk minimization.
For example, $\tilde\ell$ may be the negative log likelihood for logistic regression.
$\tilde\ell$ is potentially different from $\ell$ (e.g. the 0-1 loss) used by the teacher to define the teaching risk $R$ in~\eqref{eq:R}.

We formulate super teaching as the following bilevel combinatorial optimization problem:
\begin{eqnarray}
&&\min_{b\in\{0,1\}^n,\hat\theta\in\Theta} R(\hat\theta)\\
\mbox{s.t. } && \hat\theta = \argmin_{\theta \in \Theta} \sum_{i=1}^n b_i \tilde\ell(\theta, z_i) + \frac{\lambda}{2} \|\theta\|^2\label{eq:ML}. 
\end{eqnarray}
Under mild conditions, we may replace the lower level optimization problem (i.e. the machine learning problem~\eqref{eq:ML}) by its first order optimality (KKT) conditions:
\begin{eqnarray}
\min_{b\in\{0,1\}^n,\hat\theta\in\Theta} &&R(\hat\theta) \label{eq:MINLP}\\
\mbox{s.t. } && \sum_{i=1}^n b_i \nabla_\theta \tilde\ell(\hat\theta, z_i)  + {\lambda}\hat\theta = 0. \nonumber
\end{eqnarray}
This reduces the bilevel problem but the constraint is nonlinear in general, leading to 
a mixed-integer nonlinear program (MINLP), for which effective solvers exist.
We use the MINLP solver in NEOS~\cite{czyzyk1998neos}.

\section{Simulations}
\label{sec:exp}
We now apply the framework in section~\ref{sec:MINLP} to logistic regression and ridge regression, and show that the solver indeed selects a super-teaching subset that is far better than the original training set $S$. 

\subsection{Teaching Logistic Regression $A_{lr}$}
Let $\X=\R^d$, $\Theta=\R^d$, $\theta^* = (\frac{1}{\sqrt{d}},..., \frac{1}{\sqrt{d}})$, $p_{\Z}(x)=\N(0, I)$. Let $p_{\Z}(y\mid x)=\ind{x^\top\theta^*>0}$, which is deterministic given $x$.
Logistic regression estimates $\hat\theta_S=A_{lr}(S)$ with~\eqref{learner},
where 
$\lambda=0.1$ and
$\tilde\ell(z_i)=\log (1+\exp(-y_ix_i^\top\theta))$. 
In contrast,
The teacher's risk is defined to be the expected 0-1 loss: $R(\hat\theta)=\E{p_{\Z}}{\ind{\hat\theta(x)\neq y}}$, where $\hat\theta(x)$ is the label of $x$ predicted by $\hat\theta$. 
Since $p_{\Z}$ is symmetric about the origin, the risk can be rewritten in terms of the angle between $\hat\theta$ and $\theta^*$: $R(\hat\theta)=\arccos(\frac{\hat\theta^\top\theta^*}{||\hat\theta||\cdot||\theta^*||})/\pi$. 
Instantiating~\eqref{eq:MINLP} we have
\begin{eqnarray}
\min_{b\in\{0,1\}^n,\hat\theta\in\R^d} &&\arccos(\frac{\hat\theta^\top\theta^*}{||\hat\theta||\cdot||\theta^*||})/\pi \label{eq:Logistic}\\
\mbox{s.t. } &&  \lambda\hat\theta-\sum_{i=1}^n \frac{b_iy_ix_i}{1+\exp(y_ix_i^\top\hat\theta)} = 0. \nonumber
\end{eqnarray}

We run experiments to study the effectiveness and scalability of the NEOS MINLP solver on~\eqref{eq:Logistic}, specifically with respect to the training set size $n=|S|$ and dimension $d$. 

In the first set of experiments we fix $d=2$ and vary $n=16, 64, 256$ and $1024$. 
For each $n$ we run 10 trials.
In each trial we draw an $n$-item $iid$ sample $S \sim p_{\Z}$ and call the solver on~\eqref{eq:Logistic}.
The solver's solution to $b_1 \ldots b_n$ indicates the super teaching set $B(S)$.
We then compute an empirical version of the super teaching ratio:
$$\hat c_n=R(\hat\theta_{B(S)})/R(\hat\theta_S).$$ 

\tabcolsep=0.09cm
\begin{table}[ht]
	\small
	\centering
	\begin{tabular}{ |c|c|c|c|c|c|c|} 
		\hline
		&  \multicolumn{3}{|c| }{Logistic Regression} &  \multicolumn{3}{ |c| }{Ridge Regression} \\
		\hline
		$n=|S|$ &$\hat c_n $ &  $|B(S)|/n$ & time (s) &  $\hat c_n $ & $|B(S)|/n$ &time (s)\\ 
		\hline
		16 & 8.5e-4&0.50&3.4e-1
		&7.8e-3&0.50&6.3e-1\\ 
		64 &1.3e-3&0.69&3.5e+0
		&7.5e-3&0.70&5.8e+0\\
		256 &6.3e-3&0.67&6.0e+1
		&5.6e-3&0.84&1.4e+2  \\
		1024 &1.3e-2&0.86&1.4e+3
		&4.1e-3&0.92& 3.3e+3\\
		\hline
	\end{tabular}
		\caption{Super teaching as $n$ changes.}\label{Ep:empc}
\end{table}

\begin{table}[ht]
	\small
	\centering
	\begin{tabular}{ |c|c|c|c|c|c|c|} 
		\hline
		&  \multicolumn{3}{|c| }{Logistic Regression} &  \multicolumn{3}{ |c| }{Ridge Regression} \\
		\hline
		$d$ &$\hat c_n$ &  $|B(S)|/n$ & time (s) &  $\hat c_n $ & $|B(S)|/n$ &time (s)\\ 
		\hline
		2 & 3.1e-3 &0.67 &5.4e-1 &
		3.3e-3 &0.55 &6.6e+0\\ 
		4 & 2.4e-3 &0.44&8.5e+1 &
		7.2e-3 &0.53 &5.8e+1   \\
		8 & 1.8e-1&0.39 &4.1e+0 &
		1.5e-1 &0.47 &6.0e+0  \\
		16 & 5.6e-1&0.42 &5.1e+0 &
		4.3e-1 &0.59 &9.3e+0  \\
		32 & 8.2e-1&0.58 &1.0e+1 &
		6.4e-1 &0.86 &3.0e+0  \\
		\hline
	\end{tabular}
	\caption{Super teaching as $d$ changes.}\label{Ep:empd}
\end{table}

In the left half of Table~\ref{Ep:empc} we report the median of the following quantities over 10 trials: $\hat c_n$, the fraction of the training items selected for super teaching $|B(S)|/n$, and the NEOS server running time.

The main result is that $\hat c_n\ll1$ for all $n$, which means the solver indeed selects a super-teaching set $B(S)$ that is far better than the original $iid$ training set $S$. 
Therefore, MINLP is a valid algorithm for finding a super teaching set.

Second, we note that the solver tends to select a large subset since the median $|B(S)|/n \ge 1/2$.
This is interesting as it is known that when $S$ is dense, one can select extremely sparse super teaching sets, as small as a few items, to teach effectively~\cite{JMLR:v17:15-630}.
Understanding the different regimes remains future work. 

Finally, the running time grows fast with $n$.
For example, when $n=1024$ it takes around half an hour to solve~\eqref{eq:Logistic}.
Future work needs to address this bottleneck in applying MINLP to large problems.

In the second set of experiments we fix $n=32$ and vary $d = 2, 4, 8, 16, 32$.
The left half of Table~\ref{Ep:empd} shows the results.
The empirical teaching ratio $\hat c_n$ is still below 1 in all cases, showing super teaching.
But as the dimension of the problem increases $\hat c_n$ deteriorates toward 1.
Nonetheless, even when $d=n$ we still see a median super teaching ratio of 0.82; the corresponding super teaching set $B(S)$ has only 58\% training items than the dimension.
It is interesting that the MINLP algorithm intentionally created a ``high dimensional'' learning problem (as in higher dimension $d$ than selected training items $|B(S)|$) to achieve better teaching, knowing that the learner $A_{lr}$ is regularized.
The running time does not change dramatically.

\subsection{Teaching Ridge Regression $A_{rr}$}
Let $\X=\R^d$, $\Theta=\R^d$, $\theta^* = (\frac{1}{\sqrt{d}},..., \frac{1}{\sqrt{d}})$, $p_{\Z}(x)=\N(0, I)$, 
$p_{\Z}(y\mid x)=\N(y; x^\top\theta^*, 0.1)$. 
Let the teaching risk be the parameter difference: $R(\hat\theta)=\|\hat\theta-\theta^*\|$.
Given a sample $S$ with $n$ $iid$ items drawn from $p_{\Z}$,
ridge regression estimates $\hat\theta_S=A_{rr}(S)$ with $\lambda=0.1$ and
$\tilde\ell(z_i)=(x_i^\top\hat\theta-y_i)^2$. 
The corresponding MINLP is:
\begin{eqnarray}
\min_{b\in\{0,1\}^n,\hat\theta\in\R^d} &&||\hat\theta-\theta^*|| \label{eq:Ridge}\\
\mbox{s.t. } &&  \lambda\hat\theta+2\sum_{i=1}^n b_i(x_i^\top\hat\theta- y_i) x_i = 0. \nonumber
\end{eqnarray}
We run the same set of experiments. 
Tables ~\ref{Ep:empc} and~\ref{Ep:empd} show the results,
which are qualitatively similar to teaching logistic regression.
Again, we see the empirical super teaching ratio $\hat c_n \ll 1$, indicating the presence of super teaching.

\begin{figure}[ht]
	\centering
	\begin{subfigure}[t]{0.45\columnwidth}
		\centering
		\includegraphics[width=.9\columnwidth]{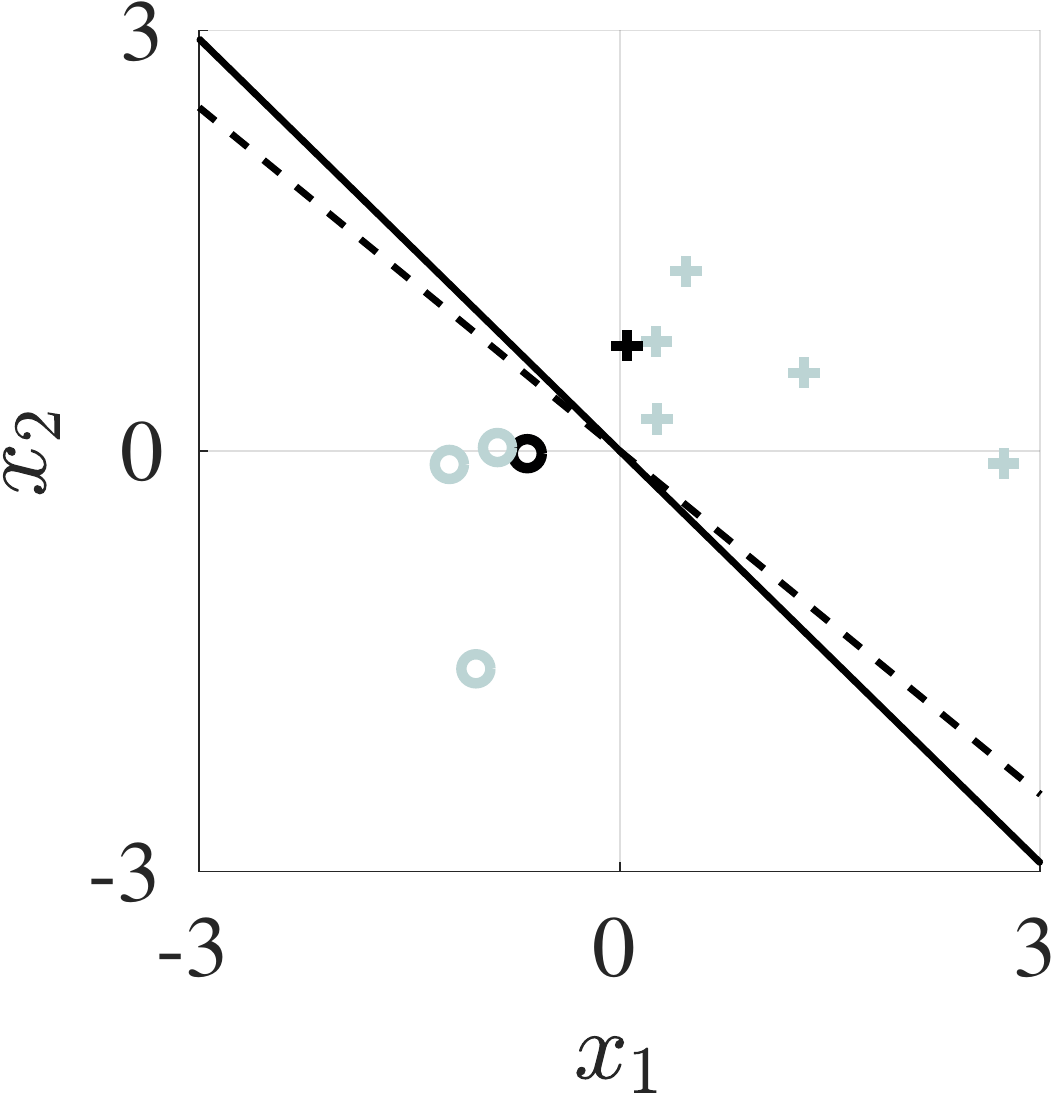}
		\caption{logistic regression}
		\label{fig:logistic}
	\end{subfigure}
	~
	\begin{subfigure}[t]{0.45\columnwidth}
		\centering
		\includegraphics[width=.9\columnwidth]{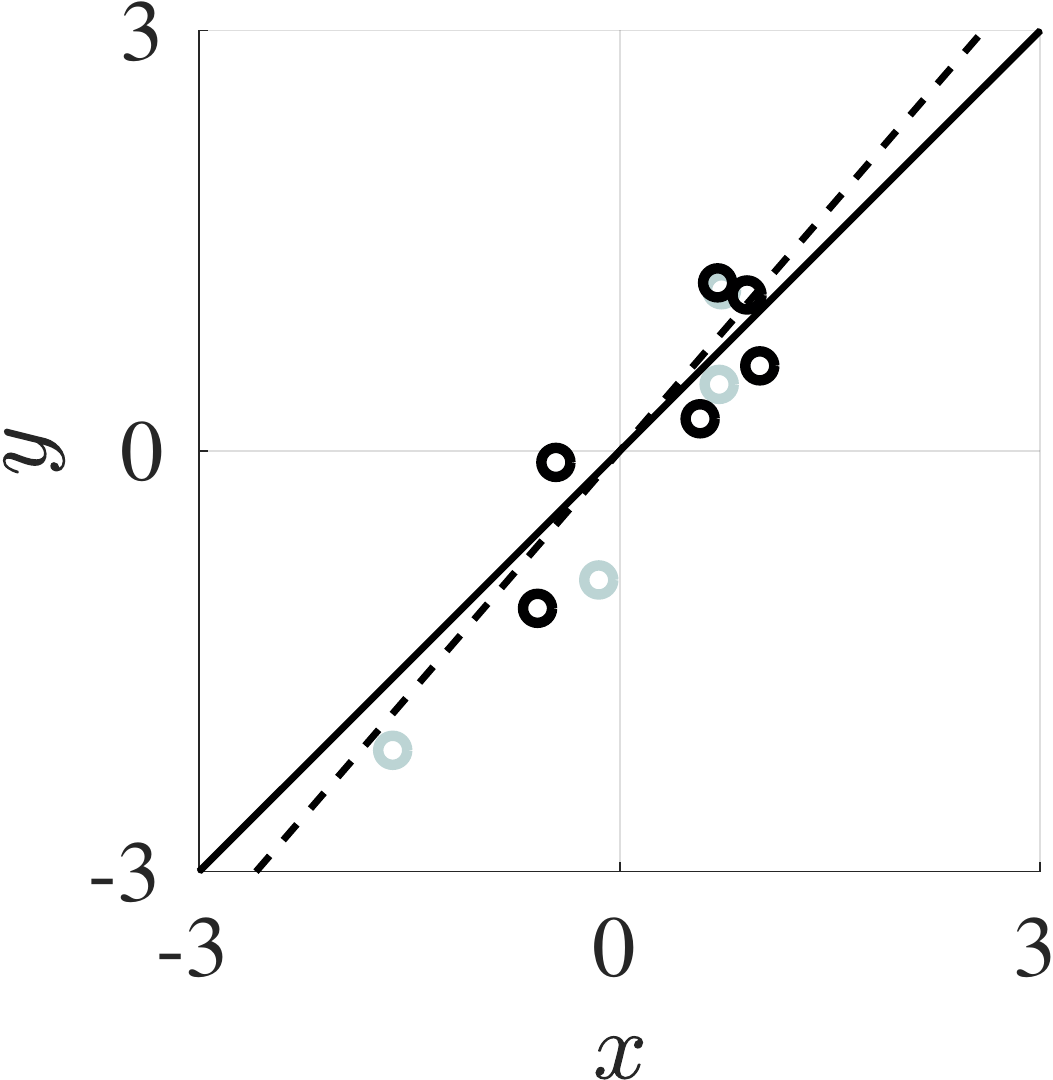} 
		\caption{ridge regression}
		\label{fig:ridge}
	\end{subfigure}%
	\caption{Typical trials from the MINLP algorithm}
	\label{fig:example}
\end{figure}

Finally, Figure~\ref{fig:example} visualizes one typical trial each for teaching logistic regression and ridge regression.	
$S$ consists of both dark and light points, while the dark ones representing $B(S)$ optimized by MINLP. 
The dashed line shows $\hat\theta_S$, while the solid lines shows $\hat\theta_{B(S)}$. 
The ground truth 
($x_1+x_2=0$ in logistic regression, $y=x$ in ridge regression) 
essentially overlaps with the solid lines.
Specifically, the super taught models $\hat\theta_{B(S)}$ have negligible risks of 2.5e-4 and 3.3e-3, whereas models $\hat\theta_S$ trained from the whole $iid$ sample $S$ incur much larger risks of 0.03 and 0.16, respectively.

\section{Related Work}
\label{sec:relatedwork}
There has been several research threads in different communities aimed at reducing a data set while maintaining its utility.
The first thread is training set reduction~\cite{garcia2012prototype,zeng2005smo,Wilson2000}, which during training time prunes items in $S$ in an attempt to improve the learned model.
The second thread is coresets~\cite{har2011geometric,2017arXiv170306476B}, a summary of $S$ such that models learned on the summary are provably competitive with models learned on the full data set $S$.
But as they do not know the target model $p_{\Z}$ or $\theta^*$, these methods cannot truly achieve super teaching.
The third thread is curriculum learning~\cite{icml2009_006} which showed that smart initialization is useful for nonconvex optimization.
In contrast, our teacher can directly encode the true model and therefore obtain faster rates.
The final thread is sample compression~\cite{floyd1995sample}, where a compression function chooses a subset $T\subset S$ and a reconstruction function to form a hypothesis.
Our present work has some similarity with compression, which allows increased accuracy since compression bounds can be used as regularization~\cite{kontorovich2017nearest}.

The theoretical study of machine teaching has focused on the teaching dimension, i.e. the minimum training set size needed to exactly teach a target concept $\theta^*$~\cite{Goldman1995Complexity,Shinohara1991Teachability,Zhu2017NoLearner,JMLR:v18:16-460,Liu2016Teaching,Zhu2015Machine,JMLR:v15:doliwa14a,zhu2013machine,Zilles2011Models,Balbach2009Recent,982362,conf/colt/AngluinK97,Goldman1996Teaching,DBLP:journals/jcss/Mathias97,Balbach2006Teaching,Balbach:2008:MTU:1365093.1365255,Kobayashi2009Complexity,journals/ml/AngluinK03,conf/colt/RivestY95,Hegedus1995Generalized,journals/ml/Ben-DavidE98}. Most of the prior work assumed a synthetic teaching setting where $S$ is the whole item space, which is often unrealistic. Liu \textit{et al.} considered approximate teaching in the finite $S$ setting~\cite{Liu2017Iterative}, though their analysis focused on a specific SGD learner. 
Our super teaching setting applies to arbitrary learners, and we allow approximate teaching -- namely we do not require the teacher to teach exactly the target model, which is infeasible in our pool-based teaching setting with a finite $S$.

Machine teaching applications include education~\cite{Clement2016edm,Patil2014Optimal,singla2014near,NIPS2013_4887,Cakmak2011Mixed,Rafferty:2011:FTP:2026506.2026545}, computer security~\cite{Alfeld2017Explicit,Alfeld2016Data,Mei2015Machine}, and interactive machine learning~\cite{Suh2016Label,AAAI124954,Khan2011How}.
By establishing the existence of super-teaching, the present paper can guide the process of finding a more effective training set for these applications.

\section{Discussions and Conclusion}
\label{discuss:superteach}
We presented super-teaching: when the teacher already knows the target model, she can often choose from a given training set a smaller subset that trains a learner better.
We proved this for two learners, and provided an empirical algorithm based on mixed integer nonlinear programming to find a super teaching set.

However, much needs to be done on the theory of super teaching.  
We give two counterexamples to illustrate that not all learners are super-teachable.
\begin{example}[MLE of interval]
Let $\X=[0, \theta^*]$, where $\theta^*\in \R^+$. $p_{\Z}(x)=U(\X)$. Given a $n$-item training set $S$, the MLE for $\theta^*$ is $\hat\theta_S=A_{int}(S)=\max_{i=1:n}x_i$. The risk is defined as $R(\hat\theta_S)=|\hat\theta_S-\theta^*|$.
We show $A_{int}$ is not super-teachable.
$\hat\theta_{B(S)}=\max_{x_i\in B(S)}x_i\le \max_{x_i\in S}x_i=\hat\theta_S$. 
Since $\hat\theta_S \le \theta^*$, $R(\hat\theta_{B(S)})=|\hat\theta_{B(S)}-\theta^*|\ge |\hat\theta_S-\theta^*|=R(\hat\theta_S)$.
\end{example}

We can generalize this to a classification setting, and show that neither the least nor the greatest consistent hypothesis is not super-teachable:
\begin{example}[Consistent learners]\label{ex:consistent}
Let $\X=[x_{\min}, x_{\max}] \subset \Z$ be an interval over the integer grid.
The hypothesis space is $\Theta=\{[a,b]\subseteq\X: \mbox{$y=1$ in $[a,b]$ and $-1$ outside}\}$.
$\theta^*=[a^*, b^*] \in \Theta$.
$p_{\Z}$ is uniform on $\X$ and noiseless $y$ labeled according to $\theta^*$.
The risk $R(\hat\theta_S)$ is the size of the symmetric difference between the two intervals $\hat\theta_S$ and $\theta^*$, normalized by $x_{\max} - x_{\min}$.
Given a sample $S$, the least consistent learner $A_{lc}$ learns the tightest interval over positive items in $S$:
$\hat\theta^{lc}_S=A_{lc}(S) \defeq \left[\min_{\substack{i=1:n\\y_i=1}} x_i, \max_{\substack{i=1:n\\y_i=1}} x_i \right].$
$\hat\theta^{lc}_S=\emptyset$ if $S$ does not contain positive items.
The greatest consistent learner $A_{gc}$ extends the hypothesis interval 
in both directions as much as possible before hitting negative points in $S$.
If $S$  has no positive we define $\hat\theta^{gc}_S=\emptyset$, too.

\begin{restatable}{proposition}{Consistent}
Neither $A_{lc}$ nor $A_{gc}$ is super-teachable.
\end{restatable}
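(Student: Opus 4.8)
The plan is to replicate, for both consistent learners, the deterministic monotonicity phenomenon already exploited in Example~1 (MLE of interval): I will show that for \emph{every} teacher $B$ and \emph{every} realization of $S$, the subset $B(S)$ can only make the symmetric-difference risk weakly larger, i.e.\ $R(\hat\theta_{B(S)}) \ge R(\hat\theta_S)$. Consequently no super teaching ratio strictly below $1$ is attainable, and each learner fails Definition~\ref{def:superteaching} in any nontrivial sense. The two learners are handled by two mirror-image arguments, corresponding to the fact that $A_{lc}$ is biased to under-shoot $\theta^*$ and $A_{gc}$ to over-shoot it.

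For $A_{lc}$ the key structural fact is one-sided bias. Since $p_{\Z}$ is noiseless with respect to $\theta^*=[a^*,b^*]$, every positively labeled item of $S$ lies in $\theta^*$, so $\hat\theta^{lc}_S$, the tightest interval over positives, is a sub-interval of $\theta^*$; likewise $\hat\theta^{lc}_{B(S)}\subseteq\theta^*$. Because the positively labeled items of $B(S)$ form a sub-multiset of those of $S$, we obtain the nesting $\hat\theta^{lc}_{B(S)}\subseteq\hat\theta^{lc}_S$ (this also covers the branch where $B(S)$ has no positive item, giving $\hat\theta^{lc}_{B(S)}=\emptyset$). Since both intervals sit inside $\theta^*$, the symmetric difference with $\theta^*$ is exactly $\theta^*\setminus\hat\theta$, and the map $\hat\theta\mapsto|\theta^*\setminus\hat\theta|$ is monotone non-increasing under inclusion; hence $R(\hat\theta^{lc}_{B(S)})\ge R(\hat\theta^{lc}_S)$. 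This step is entirely deterministic and is the exact analogue of Example~1.

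For $A_{gc}$ the bias flips to the other side. Since every negatively labeled item of $S$ lies outside $\theta^*$, the greatest consistent extension never cuts into $\theta^*$, so $\theta^*\subseteq\hat\theta^{gc}_S$, and similarly $\theta^*\subseteq\hat\theta^{gc}_{B(S)}$ whenever $B(S)$ still contains a positive item. Passing from $S$ to $B(S)$ only removes negative ``barriers'', so any interval consistent with $S$ is still consistent with $B(S)$, giving $\hat\theta^{gc}_S\subseteq\hat\theta^{gc}_{B(S)}$; now the symmetric difference with $\theta^*$ equals $\hat\theta\setminus\theta^*$, and $\hat\theta\mapsto|\hat\theta\setminus\theta^*|$ is monotone non-decreasing, so $R(\hat\theta^{gc}_{B(S)})\ge R(\hat\theta^{gc}_S)$. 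The only place the nesting argument breaks is when the teacher deletes \emph{all} positive items, forcing $\hat\theta^{gc}_{B(S)}=\emptyset$ and risk $|\theta^*|/(x_{\max}-x_{\min})$, a fixed positive constant; this exceeds $R(\hat\theta^{gc}_S)$ with probability tending to $1$ because $\hat\theta^{gc}_S$ is a consistent estimator of $\theta^*$ under uniform sampling on the grid. Dispatching this degenerate branch is the only real obstacle; the rest is the same monotonicity-of-symmetric-difference bookkeeping as in Example~1, and combining the $A_{lc}$ and $A_{gc}$ cases yields the proposition.
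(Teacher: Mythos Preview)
Your proof is correct and takes essentially the same approach as the paper: the nesting $\hat\theta^{lc}_{B(S)} \subseteq \hat\theta^{lc}_S \subseteq \theta^*$ for $A_{lc}$ and the mirrored inclusion $\theta^* \subseteq \hat\theta^{gc}_S \subseteq \hat\theta^{gc}_{B(S)}$ for $A_{gc}$, followed by monotonicity of the symmetric-difference risk. Your explicit treatment of the degenerate branch where $B(S)$ retains no positive item (handled probabilistically via consistency of $\hat\theta^{gc}_S$) is slightly more careful than the paper's one-line ``similar'' for $A_{gc}$, but it is an elaboration of the same argument rather than a different route.
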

\begin{proof}
We first show $A_{lc}$ is not super-teachable. Note that $A_{lc}$ learns the tightest interval consistent with $S$, thus we always have $\hat\theta^{lc}_S\subseteq \theta^*$. Now we show that $\hat\theta^{lc}_{B(S)}\subseteq \hat\theta^{lc}_S$ is always true so that $R(\hat\theta^{lc}_S)\le R(\hat\theta^{lc}_{B(S)})$ follows.

If $\theta^*=\emptyset$, then trivially $\hat\theta^{lc}_{B(S)}= \hat\theta^{lc}_S=\emptyset$.

Now assume $\theta^*\neq\emptyset$.
If $\exists (x, 1)\in B(S)$, let $[a_1, b_1]=\hat\theta^{lc}_{B(S)}$. Note that $\hat\theta^{lc}_S\neq \emptyset$ because $B(S)\subseteq S$ and thus $S$ has at least one positive point. Let $\hat\theta^{lc}_S=[a_2,b_2]$. 
Now $a_1=\min\{x\mid (x, 1)\in B(S)\}\ge \min\{x\mid (x, 1)\in S\}=a_2$, and $b_1=\max\{x\mid (x, 1)\in B(S)\}\le \max\{x\mid (x, 1)\in S\}=b_2$. Thus we have $\hat\theta^{lc}_{B(S)}\subseteq \hat\theta^{lc}_S$.
If $\nexists (x, 1)\in B(S)$, $\hat\theta^{lc}_{B(S)}=\emptyset$ and $\hat\theta^{lc}_{B(S)}\subseteq \hat\theta^{lc}_S$ is always true.

Thus $\hat\theta^{lc}_{B(S)}\subseteq \hat\theta^{lc}_S\subseteq \theta^*$ for any $B$ and any $S$.

The proof for $A_{gc}$ is similar by showing $\theta^*\subseteq \hat\theta^{gc}_S\subseteq \hat\theta^{gc}_{B(S)}$.
\end{proof}
\end{example}


This leads to an open question: which family of learners are super teachable?
We offer a conjecture here:
we speculate that MLEs (and the derived MAP estimates or regularized empirical risk minimizers) which satisfy the asymptotic normality conditions~\cite{white1982maximum} are super teachable.
This conjecture is motivated by its similarity to the proof in section~\ref{sec:Gaussian}.
Also note that the two counterexamples are classic examples of MLE that do \emph{not} satisfy the asymptotic normality conditions. 

Another open question concerns the optimal super-teaching subset size $k$ for a given training set of size $n$. For example, our result on teaching the MLE of Gaussian mean indicates that the rate improves as $k$ grows.
However, our analysis only applies to a fixed $k$.
Further research is needed to identify the optimal $k$.

\textbf{Acknowledgments}: R.N. acknowledges support by NSF IIS-1447449 and CCF-1740707. P.R. is supported in part by grants NSF DMS-1712596, NSF DMS-TRIPODS-1740751, DARPA W911NF-16-1-0551, ONR N00014-17-1-2147 and a grant from the MIT NEC Corporation.
X.Z. is supported in part by NSF CCF-1704117, IIS-1623605, CMMI-1561512, DGE-1545481, and CCF-1423237. 

\bibliography{zhu}
\bibliographystyle{abbrv}

\onecolumn
\newpage
\section*{Supplemental Material}

\lemAmp*
\begin{proof}
The risk is $R(\hat\theta_S)=|\hat\theta_S|$. Define event $E: \{\exists (x,-1)\in S\land \exists (x,+1)\in S\}$. $\P{}{|\hat\theta_S|\le\epsilon}$ can be decomposed into two components depending on if $E$ happens as \eqref{eq:Amp_divide} shows. 
\begin{equation}\label{eq:Amp_divide}
\P{}{|\hat\theta_S|\le\epsilon}=\P{}{|\hat\theta_S|\le\epsilon, E}+\P{}{|\hat\theta_S|\le\epsilon,E^c}.
\end{equation}
$\P{}{|\hat\theta_S|\le\epsilon,E^c}=\P{}{|\hat\theta_S|\le\epsilon\mid E^c}\P{}{E^c}$. Note that $\P{}{E^c}=(\frac{1}{2})^{n-1}$, $\P{}{|\hat\theta_S|\le\epsilon\mid E^c}$ is 0 if $\epsilon<1$ and 1 if $\epsilon=1$ because $\hat\theta_S=\pm1$ always holds given $E^c$ happens. Thus,
\begin{equation}\label{eq:Ec}
\P{}{|\hat\theta_S|\le\epsilon, E^c}=\left\{
\begin{aligned}
&0 && \mbox{if $\epsilon<1$}\\
&(\frac{1}{2})^{n-1} && \mbox{if $\epsilon=1$}.
\end{aligned}
\right.
\end{equation}
Now we compute $\P{}{|\hat\theta_S|\le\epsilon, E}$. Let $n_+$ be the number of positive points in $S$. Define $E_i: \{n_+=i\}$. Note $E_i\cap E_j=\emptyset$ if $i\neq j$ and $E=\cup_{i=1}^{n-1}E_i$, thus 
\begin{equation}\label{divide}
\P{}{|\hat\theta_S|\le\epsilon, E}=\sum_{i=1}^{n-1}\P{}{|\hat\theta_S|\le\epsilon, E_i}=\sum_{i=1}^{n-1}\P{}{|\hat\theta_S|\le\epsilon \mid E_i}\P{}{E_i}.
\end{equation}
$\P{}{E_i}=C^n_i(\frac{1}{2})^n$. Note that $\P{}{|\hat\theta_S|\le\epsilon \mid E_i}=\P{}{|\frac{x_-+x_+}{2}|\le\epsilon \mid E_i}$. To compute it, we first compute $F_{-x_-, x_+}(\epsilon_1, \epsilon_2 \mid E_i)=\P{}{-x_-\le \epsilon_1, x_+\le \epsilon_2\mid E_i}$. Given $E_i$ happens, $\P{}{-x_-\le  \epsilon_1\mid E_i}=1-(1-\epsilon_1)^{n-i}$ and $\P{}{x_+\le  \epsilon_2\mid E_i}=1-(1-\epsilon_2)^{i}$. Also since $-x_-\le  \epsilon_1$ and $x_+\le  \epsilon_2$ are independent given $E_i$ happens, thus
\begin{equation}
F_{-x_-, x_+}(\epsilon_1, \epsilon_2 \mid E_i)=\P{}{-x_-\le \epsilon_1, x_+\le \epsilon_2\mid E_i}=[1-(1-\epsilon_1)^{n-i}][1-(1-\epsilon_2)^{i}].
\end{equation}
Take the derivative of $F$ gives
\begin{equation}
f_{-x_-, x_+}(\epsilon_1, \epsilon_2 \mid E_i)=i(n-i)(1-\epsilon_1)^{n-i-1}(1-\epsilon_2)^{i-1}.
\end{equation}
Note that $|\hat\theta_S|\le \epsilon\Leftrightarrow |-x_1-x_2|\le 2\epsilon$. Therefore, we integrate $f_{-x_-, x_+}(\epsilon_1, \epsilon_2 \mid E_i)$ over the region $|\epsilon_1-\epsilon_2|\le 2\epsilon$ to obtain $\P{}{|\hat\theta_S|\le\epsilon \mid E_i}$. However, note that $0\le\epsilon_1,\epsilon_2\le1$, thus for $\epsilon> \frac{1}{2}$, the region $|\epsilon_1-\epsilon_2|\le 2\epsilon$ becomes the whole $[0,1]\times[0,1]$ and the integration is 1. Then \eqref{divide} becomes $\P{}{|\hat\theta_S|\le\epsilon, E}=\sum_{i=1}^{n-1}\P{}{E_i}=\P{}{E}=1-(\frac{1}{2})^{n-1}$. For $\epsilon\le\frac{1}{2}$, by \eqref{divide} we have
\begin{equation}\label{inte_sum}
\begin{aligned}
\P{}{|\hat\theta_S|\le\epsilon, E}&=\sum_{i=1}^{n-1}\P{}{E_i}\int_{|\epsilon_1-\epsilon_2|\le2\epsilon}i(n-i)(1-\epsilon_1)^{n-i-1}(1-\epsilon_2)^{i-1}d\epsilon_2d\epsilon_1\\
&=\sum_{i=1}^{n-1}C^n_i(\frac{1}{2})^n\int_{|\epsilon_1-\epsilon_2|\le2\epsilon}i(n-i)(1-\epsilon_1)^{n-i-1}(1-\epsilon_2)^{i-1}d\epsilon_2d\epsilon_1\\
&=(\frac{1}{2})^n\int_{|\epsilon_1-\epsilon_2|\le2\epsilon}\sum_{i=1}^{n-1}C^n_ii(n-i)(1-\epsilon_1)^{n-i-1}(1-\epsilon_2)^{i-1}d\epsilon_2d\epsilon_1.\\
\end{aligned}
\end{equation}
Note that $C^n_ii(n-i)=n(n-1)C^{n-2}_{i-1}$, \eqref{inte_sum} becomes
\begin{equation}\label{inte_sum2}
\begin{aligned}
&\P{}{|\hat\theta_S|\le\epsilon, E}=n(n-1)(\frac{1}{2})^{n}\int_{|\epsilon_1-\epsilon_2|\le2\epsilon}\sum_{i=1}^{n-1}C^{n-2}_{i-1}(1-\epsilon_1)^{n-i-1}(1-\epsilon_2)^{i-1}d\epsilon_2d\epsilon_1\\
&=n(n-1)(\frac{1}{2})^{n}\int_{|\epsilon_1-\epsilon_2|\le2\epsilon}\sum_{i=0}^{n-2}C^{n-2}_i(1-\epsilon_1)^{n-2-i}(1-\epsilon_2)^{i}d\epsilon_2d\epsilon_1\\
&=n(n-1)(\frac{1}{2})^n\int_{|\epsilon_1-\epsilon_2|\le2\epsilon}(2-\epsilon_1-\epsilon_2)^{n-2}d\epsilon_2d\epsilon_1\\
&=n(n-1)(\frac{1}{2})^n[\int_{[0,1]\times[0,1]}(2-\epsilon_1-\epsilon_2)^{n-2}d\epsilon_2d\epsilon_1-\int_{|\epsilon_1-\epsilon_2|>2\epsilon}(2-\epsilon_1-\epsilon_2)^{n-2}d\epsilon_2d\epsilon_1].\\
\end{aligned}
\end{equation}
Now we compute the two integration in \eqref{inte_sum2}
\begin{equation}\label{inte1}
\begin{aligned}
&\int_{[0,1]\times[0,1]}(2-\epsilon_1-\epsilon_2)^{n-2}d\epsilon_2d\epsilon_1=\int\displaylimits_0^1[-\frac{1}{n-1}(2-\epsilon_1-\epsilon_2)^{n-1}| _0^1]d\epsilon_1\\
&=\int\displaylimits_0^1\frac{1}{n-1}[(2-\epsilon_1)^{n-1}-(1-\epsilon_1)^{n-1}]d\epsilon_1=[-\frac{1}{n(n-1)}(2-\epsilon)^n+\frac{1}{n(n-1)}(1-\epsilon_1)^n]\mid_0^1=\frac{2^n-2}{n(n-1)}.
\end{aligned}
\end{equation}
For the second integration, note that it can decomposed as
\begin{equation}
\int_{|\epsilon_1-\epsilon_2|>2\epsilon}(2-\epsilon_1-\epsilon_2)^{n-2}d\epsilon_2d\epsilon_1=\int\displaylimits_0^{1-2\epsilon}\int\displaylimits_{\epsilon_1+2\epsilon}^1(2-\epsilon_1-\epsilon_2)^{n-2}d\epsilon_2d\epsilon_1+\int\displaylimits_{2\epsilon}^1\int\displaylimits_0^{\epsilon_1-2\epsilon}(2-\epsilon_1-\epsilon_2)^{n-2}d\epsilon_2d\epsilon_1.
\end{equation}
Since the two sub-integration's are identical because the two sub regions are symmetric. We only show the computation for the first.
\begin{equation}
\begin{aligned}
&\int\displaylimits_0^{1-2\epsilon}\int\displaylimits_{\epsilon_1+2\epsilon}^1(2-\epsilon_1-\epsilon_2)^{n-2}d\epsilon_2d\epsilon_1=\int\displaylimits_0^{1-2\epsilon}[-\frac{1}{n-1}(2-\epsilon_1-\epsilon_2)^{n-1}| _{\epsilon_1+2\epsilon}^1]d\epsilon_1\\
&=\int\displaylimits_0^{1-2\epsilon}[-\frac{1}{n-1}(1-\epsilon_1)^{n-1}+\frac{2^{n-1}}{n-1}(1-\epsilon_1-\epsilon)^{n-1}]d\epsilon_1\\
&=[\frac{1}{n(n-1)}(1-\epsilon_1)^n-\frac{2^{n-1}}{n(n-1)}(1-\epsilon_1-\epsilon)^n]\mid_0^{1-2\epsilon}\\
&=\frac{2^{n-1}}{n(n-1)}[\epsilon^n+(1-\epsilon)^n]-\frac{1}{n(n-1)}.
\end{aligned}
\end{equation}
Thus we have
\begin{equation}\label{inte2}
\begin{aligned}
&\int_{|\epsilon_1-\epsilon_2|>2\epsilon}(2-\epsilon_1-\epsilon_2)^{n-2}d\epsilon_2d\epsilon_1=2\int\displaylimits_0^{1-2\epsilon}\int\displaylimits_{\epsilon_1+2\epsilon}^1(2-\epsilon_1-\epsilon_2)^{n-2}d\epsilon_2d\epsilon_1\\
&=\frac{2^{n}}{n(n-1)}[\epsilon^n+(1-\epsilon)^n]-\frac{2}{n(n-1)}.
\end{aligned}
\end{equation}
Combine \eqref{inte1} and \eqref{inte2}, we can compute \eqref{inte_sum2} as follows.
\begin{equation}
\begin{aligned}
\P{}{|\hat\theta_S|\le\epsilon, E}=&=n(n-1)(\frac{1}{2})^{n}[\frac{2^n-2}{n(n-1)}-\frac{2^n}{n(n-1)}(\epsilon^n+(1-\epsilon)^n)+\frac{2}{n(n-1)}]\\
&=\frac{2^n-2}{2^n}-\epsilon^n-(1-\epsilon)^n+(\frac{1}{2})^{n-1}\\
&=1-\epsilon^n-(1-\epsilon)^n
\end{aligned}
\end{equation}
Therefore we have
\begin{equation}\label{eq:E}
\P{}{|\hat\theta_S|\le\epsilon, E}=\left\{
\begin{aligned}
&1-\epsilon^n-(1-\epsilon)^n&& \mbox{if $\epsilon\le\frac{1}{2}$}\\
&1-(\frac{1}{2})^{n-1} && \mbox{if $\frac{1}{2}<\epsilon\le1$}.
\end{aligned}
\right.
\end{equation}
Now combine \eqref{eq:Ec} and \eqref{eq:E} we have
\begin{equation}
\P{}{|\hat\theta_S|\le\epsilon}=\left\{
\begin{aligned}
&1-\epsilon^n-(1-\epsilon)^n&& \mbox{if $\epsilon\le\frac{1}{2}$}\\
&1-(\frac{1}{2})^{n-1} && \mbox{if $\frac{1}{2}<\epsilon<1$}\\
&1&&\mbox{if $\epsilon=1$}.
\end{aligned}
\right.
\end{equation}
which is equivalent to \eqref{eq:Amptail}.
\end{proof}

\lemAmshighP*
\begin{proof}
Let $S_1=\{x\mid (x, 1)\in S\}$ and $S_2=\{x\mid (x, -1)\in S\}$ respectively. Then we have $|S_1|+|S_2|=4m$. Define event $E_1:\{|S_1|\ge m\land |S_2|\ge m\}$. Then we have
\begin{equation}
\P{}{E_1}=1-2\sum_{i=0}^{m-1}C^{4m}_i(\frac{1}{2})^{4m}.
\end{equation}
where we rule out all possible sequences of $4m$ points which lead to $|S_1|<m$ or $|S_2|<m$.
By standard result~\cite{Shalev-Shwartz:2014:UML:2621980} (Lemma A.5) $\sum_{k=0}^{d}C^m_k\le(\frac{em}{d})^d$, we have
\begin{equation}
\P{}{E_1}\ge1-2(\frac{4em}{m-1})^{m-1}(\frac{1}{2})^{4m}=1-\frac{1}{2}\frac{e^{m-1}}{4^m}(\frac{m}{m-1})^{m-1}\ge1-\frac{1}{2}(\frac{e}{4})^m\ge1-(\frac{e}{4})^m
\end{equation}
where the 2nd-to-last inequality follows from the fact that $e \ge (1+\frac{1}{m-1})^{m-1}$.
Note that by definition $m\ge\frac{3e}{\ln4-1}\ln\frac{3}{\delta}>\frac{1}{\ln4-1}\ln\frac{3}{\delta}$, thus $(\frac{e}{4})^m<\frac{\delta}{3}$ and $\P{}{E_1}>1-\frac{\delta}{3}$.
Since $|S_1|+|S_2|=4m$, then either $|S_1|\ge2m$ or $|S_2|\ge2m$. Without loss of generality we assume $|S_1|\ge2m$. We then divide the interval [0, 1] equally into $N=\lfloor m^2(\ln\frac{3}{\delta})^{-1} \rfloor$ segments. The length of each segment is $\frac{1}{N}=O(\frac{1}{m^2})$ as Figure~\ref{segments} shows. Note that $m\ge\frac{3e}{\ln4-1}\ln\frac{3}{\delta}>3e\ln\frac{3}{\delta}$, thus $N\ge\lfloor3em\rfloor>2em>m$.
\begin{figure}[H]
\centering
\includegraphics[width=3.5in,height=0.5in]{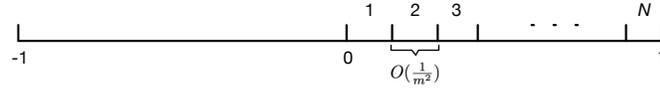}
\caption{segments}
\label{segments}
\end{figure}
Let $N_o$ be the number of segments that are occupied by the points in $S_1$. Note that $N_o$ is a random variable.
Let $E_2$ be the event that $N_o\ge m$. Now we lower bound $\P{}{E_2}$. 
This is a variant of the coupon collector's problem: there are $N$ distinct coupons, and in $|S_1|$ trials we want to collect at least $m$ distinct coupons. 
Note that $\P{}{E_2}=1-\P{}{E_2^c}=1-\sum_{i=1}^{m-1}\P{}{N_o=i}$. 
Let $T_i$ be the number of all possible coupon sequences of $S_1$ such that $S_1$ occupies exactly $i$ segments (i.e. distinct coupons). 
We have $C^n_i$ ways of choosing $i$ segments among a total of $N$. Also, for each choice of $i$ segments, the number of all possible coupon sequences of $S_1$ such that $S_1$ fully occupies those $i$ segments without empty is upper bounded by $i^{|S_1|}$. Thus $T_i\le C^n_i i^{|S_1|}$ and we have
\begin{equation}
\P{}{N_o=i}=\frac{T_i}{N^{|S_1|}}\le C^n_i(\frac{i}{N})^{|S_1|}.
\end{equation}
Since $m\ge\frac{3e}{\ln4-1}\ln\frac{3}{\delta}>\log_2\frac{3}{\delta}$, $|S_1|\ge 2m$, and $N>2em$, thus
\begin{equation}
\begin{aligned}
\P{}{E_2^c}&=\sum_{i=1}^{m-1}\P{}{N_o=i}\le\sum_{i=1}^{m-1}C^n_i(\frac{i}{N})^{|S_1|}\le\sum_{i=1}^{m-1}C^n_i(\frac{m}{N})^{2m}\\
&< \sum_{i=0}^{m}C^n_i(\frac{m}{N})^{2m}
\le (\frac{eN}{m})^m(\frac{m}{N})^{2m}=(\frac{em}{N})^m<(\frac{em}{2em})^m=(\frac{1}{2})^m<\frac{\delta}{3}.
\end{aligned}
\end{equation}
Thus $\P{}{E_2}\ge 1-\frac{\delta}{3}$. Applying union bound, $\P{}{E_1, E_2}\ge 1-\frac{2\delta}{3}$.

Let $E_3$ be the following event: there exist a point $x_2$ in $S_2$ such that $-x_2$, the flipped point, lies in the same segment as some point $x_1$ in $S_1$. If $E_3$ happens, then $|x_1+x_2|=|x_1-(-x_2)|\le\frac{1}{N}$. Note that $\P{}{E_3}\ge \P{}{E_1, E_2, E_3}=\P{}{E_3\mid E_1, E_2}\P{}{E_1, E_2}$. Now we lower bound $\P{}{E_3\mid E_1, E_2}$. Given $E_1$ and $E_2$ happen,  we have $|S_2|\ge m$ and $N_o\ge m$. Since $N=\lfloor m^2(\ln\frac{3}{\delta})^{-1} \rfloor\le m^2(\ln\frac{3}{\delta})^{-1}$, we have
\begin{equation}
\P{}{E_3^c\mid E_1, E_2}=(1-\frac{N_o}{N})^{|S_2|}\le(1-\frac{m}{N})^{|S_2|}\le(1-\frac{m}{N})^{m}\le e^{-\frac{m^2}{N}}\le\frac{\delta}{3}.
\end{equation}
Thus, $\P{}{E_3\mid E_1, E_2}=1-\P{}{E_3^c\mid E_1, E_2}> 1-\frac{\delta}{3}$.
$\P{}{E_3}\ge \P{}{E_1, E_2, E_3}=\P{}{E_3\mid E_1, E_2}\P{}{E_1, E_2}\ge (1-\frac{\delta}{3})(1-\frac{2\delta}{3})> 1-\delta$. Thus with probability at least $1-\delta$, there exist $x_2 \in S_2$ and $x_1 \in S_1$ such that $|x_1+x_2|\le\frac{1}{N}$. 

We now bound $\frac{1}{N}$. 
$N=\lfloor m^2(\ln\frac{3}{\delta})^{-1} \rfloor\ge\frac{1}{2}m^2(\ln\frac{3}{\delta})^{-1}$. Therefore $\frac{1}{N}\le\frac{2}{m^2}\ln\frac{3}{\delta}$. 
Recall by definition $m\ge(\frac{1}{\epsilon}\ln\frac{3}{\delta})^{\frac{1}{2}}$, thus $\frac{1}{N}\le2\epsilon$. 

We now have $|x_1+x_2|\le2\epsilon$. Finally, since $\{s_-, s_+\}$ selected by teacher $B_{ms}$ is the most symmetric pair, it must satisfy $|s_-+s_+|\le|x_1+x_2|\le2\epsilon$.
Putting together, with probability at least $1-\delta$, $R(\hat\theta_{B_{ms}(S)})=\frac{1}{2}|s_-+s_+|\le\epsilon$. 
\end{proof}

\end{document}